\numberwithin{equation}{section} 
\newacronym[plural=one-class SVMs,longplural=one-class support vector machines]{ocsvm}{one-class SVM}{one-class support vector machine}
\newacronym[plural=RKHSs,longplural=reproducing kernel Hilbert spaces]{rkhs}{RKHS}{reproducing kernel Hilbert space}
\newacronym[]{cvar}{CVaR}{conditional value-at-risk}
\numberwithin{theorem}{section}
\newcommand{\R}{\mathbb{R}}
\newcommand{\X}{\mathcal{X}}
\newcommand{\N}{\mathbb{N}}
\newcommand{\pr}{\mathbb{P}}
\newcommand{\ex}{\mathbb{E}}
\newcommand{\h}{\mathcal{H}}
\newcommand{\Hh}{\mathbb{H}}
\newcommand{\Oo}{\mathcal{O}}
\newcommand{\Cc}{\mathscr{C}}
\newcommand{\statespace}{\mathbb{R}^d}
\newcommand{\tensora}{A}
\newcommand{\Hent}{\boldsymbol{\mathrm{H}}}
\newcommand{\id}{\mathbf{1}}
\newcommand{\rkhsnorm}[1]{\|#1\|_{2}}
\newtheorem{assumption}[theorem]{Assumption}
\renewenvironment{keywords}{%
\par\noindent{\bf Keywords:}\ }%
{\par\vskip0pt} 
\begin{document}

\title{Novelty detection on path space}

\author{\name Ioannis Gasteratos 
        \email i.gasteratos@tu-berlin.de \\
        \addr Institute of Mathematics\\
        TU Berlin
       \AND
       \name Antoine Jacquier 
       \email a.jacquier@imperial.ac.uk \\
       \addr Department of Mathematics\\
       Imperial College London
       \AND
       \name Maud Lemercier 
       \email maud.lemercier@maths.ox.ac.uk \\
       \addr Mathematical Institute \\
       University of Oxford
       \AND
       \name Terry Lyons 
       \email terry.lyons@maths.ox.ac.uk \\
       \addr Mathematical Institute \\
       University of Oxford
       \AND
       \name Cristopher Salvi 
       \email c.salvi@imperial.ac.uk \\
       \addr Department of Mathematics\\
       Imperial College London
       }
\editor{My editor}

\maketitle

\begin{abstract}
We frame novelty detection on path space as a hypothesis testing problem with signature-based test statistics. Using transportation-cost inequalities of~\citet{gasteratos2023transportation}, we obtain tail bounds for false positive rates that extend beyond Gaussian measures to laws of RDE solutions with smooth bounded vector fields, yielding estimates of quantiles and p-values. Exploiting the shuffle product, we derive exact formulae for smooth surrogates of conditional value-at-risk (CVaR) in terms of expected signatures, leading to new one-class SVM algorithms optimising smooth CVaR objectives. We then establish lower bounds on type-$\mathrm{II}$ error for alternatives with finite first moment, giving general power bounds when the reference measure and the alternative are absolutely continuous with respect to each other. Finally, we evaluate numerically the type-$\mathrm{I}$ error and statistical power of signature-based test statistic, using synthetic anomalous diffusion data and real-world molecular biology data.
\end{abstract}

\begin{keywords}
 Anomaly detection, hypothesis testing, signature features, rough paths
\end{keywords}
\begin{msc}
 60L10, 60L20, 62M07   
\end{msc}  
\tableofcontents

\section{Introduction}

\medskip \noindent
Determining whether an observed trajectory is consistent with a reference process is a fundamental novelty detection problem, with applications ranging from detecting unusual particle motion in biological cells~\citep{munoz2021unsupervised} to identifying rare astronomical signals in telescope data~\citep{wilensky2019absolving} and detecting malicious activities in network traffic~\citep{cochrane2021sk}. Most existing approaches cast this as one-class classification on time series data and focus primarily on feature engineering or representation learning to obtain suitable embeddings \citep{li2023deep,zamanzadeh2024deep}. In this paper, we frame trajectory-based novelty detection as a hypothesis testing problem on path space. The goal is to test whether an observed path $X:[0,T]\to\R^d$ is drawn from a reference measure $\mu$ representing normal behaviour, or from an alternative distribution $\nu\neq\mu$, i.e. we want to test the null hypothesis
\begin{align*}
    H_0:X\sim \mu \text{ against } H_1:X\sim \nu\neq \mu,
\end{align*} 
where $\nu$ and $\mu$ are two distributions on some space of paths $\X\subset C([0,T];\R^d), T>0$. To perform this hypothesis test, we map each sample path $X$ to a scalar statistic via a map $f:\X\to\R$ which takes the form $f(X)=\langle w, S(X)\rangle$ where $w\in T((\R^d)^*)$ is a linear functional acting on the signature of the process, given by the formal tensor
series of iterated integrals
\begin{align*}
    S(X)=1+\sum_{k=1}^{\infty}S^k(X)\in T((\R^d)),
    \text{ with } S^k(X)=\int_{0<t_1<\cdots<t_k<T} dX_{t_1}\otimes\cdots \otimes dX_{t_k},
\end{align*}
and $\langle\cdot,\cdot\rangle$ denotes the natural pairing between the infinite formal series $T((\R^d))=\prod_{k=0}^{\infty} (\R^d)^{\otimes k}$ and the finite sequences in $T((\R^d)^*)=\bigoplus_{k=0}^{\infty}((\R^d)^{\otimes k})^*$. The null hypothesis $H_0$ is rejected if the observed path $X$ belongs to the rejection region $\Omega_r$ of the form
\begin{equation}\label{rejection_region}
    X\in \Omega_r\iff \langle w, S(X)\rangle>r.
\end{equation}
This framework entails two types of errors. If $X\sim \mu$, but $H_0$ is rejected based on (\ref{rejection_region}), the test incurs a type-$\mathrm{I}$ error. The probability of making such an error---the false positive rate---is given by $\alpha:=\mu(x\in \Omega_r)$. Conversely, if $X\sim \nu\neq \mu$, but $H_0$ is not rejected based on (\ref{rejection_region}), a type-$\mathrm{II}$ error occurs, with probability $\beta:=\nu(x\not\in \Omega_r)$. The power of our test against the alternative $\nu$ is given by $1-\beta$.

The central challenge is to choose the linear functional $w$ and the threshold $r$ to achieve high power against a broad class of alternative distributions $\nu$ on $\X$, while maintaining a fixed significance level $\alpha\in[0,1]$ (i.e., controlling the probability of type-$\mathrm{I}$ error). 
In statistical hypothesis testing, the test can equivalently be formulated in terms of p-values. Specifically, it is common to report the probability of obtaining test results at least as extreme as the result actually observed, as a measure of significance of the test (here, evidence of path $X$ being a novelty) rather than whether or not the null hypothesis was rejected at a pre-determined level of significance $\alpha$. 

Regarding critical value selection, the threshold $r$ corresponding to a given significance level $\alpha$ is determined by $r(\alpha)=Q_{f(X)}(1-\alpha)$, the $(1-\alpha)$-quantile of the null distribution of the test statistic $f(X)$. The $\alpha$-quantile of a real-valued random variable $Z$ with law $P_Z$ is defined as 
\begin{equation*}
    Q_{Z}(\alpha):=\inf\{r \in \R \mid P_Z(z\leq r)\geq \alpha\}.
\end{equation*}
In risk management contexts, quantiles are also known as values at risk. In the sequel they will be denoted by VaR$_\alpha(Z).$
If the cumulative distribution function $F$ of the random variable $f(X)$ under the null hypothesis is continuous, this reduces to the relation $r(\alpha)=F^{-1}(1-\alpha)$. However, in general, for a given measure $\mu$ on $\X$ and test statistic $f:\X\to\R$ the distribution of $f(X)$ is not available in closed form. Therefore, it is rarely possible to conduct the above test. A common workaround is to use a calibration sample $X^{(1)}, \ldots, X^{(n)} \overset{\text{i.i.d.}}{\sim} \mu$ and the corresponding empirical estimator $F_n(f(x))=\frac{1}{n}\sum_{i=1}^{n}\mathds{1}\{f(X^{(i)})\leq f(x)\}$ to determine the critical value of the test. While simple and widely used, this approach comes with important caveats~\citep{bates2023testing}. Conditioned on a fixed calibration set, the false positive rate may exceed the target level $\alpha$; it is only guaranteed to be controlled marginally over calibration sets. Moreover, when empirical p-values are used, the dependencies introduced by sharing the same calibration set can invalidate certain multiple testing procedures, such as Fisher’s combination test. These limitations are especially relevant in novelty detection applications on data streams, where practitioners must often test multiple data segments sequentially. For instance, in radio astronomy, the task of identifying radio-frequency interference (RFI) in telescope visibility measurements involves repeatedly testing thousands of short time-frequency windows for anomalies~\citep{arrubarrena2024novelty}. A similar challenge arises in biological signal analysis, where electrical readouts from RNA molecules are scanned segment by segment to identify potential chemical modifications~\citep{leger2021rna}. A solution is to derive an upper bound on the tail probabilities of the test statistic, ensuring that the rejection threshold controls the false positive rate regardless of the calibration sample, or with high probability over the sample.

Regarding the choice of score function (one-class classifier) $f:\X\to\R$, it is important to ensure that the test has high statistical power, i.e. is effective at flagging outliers. Since the signature has the \emph{universal approximation property} on compact subsets of $\X$ (the set of functions of the form $f(x)=\langle w, S(x)\rangle$ is dense in the set of continuous functions on any such compact subset), the class of test statistics we consider is extremely rich \citep{arribas2018derivatives}. Leveraging this universal approximation property and other algebraic features, signature methods have rapidly gained traction across domains such as quantitative finance~\citep{perez2020signatures, cuchiero2023signature, abi2025signature, cuchiero2023signature, bonesini2024rough}, cybersecurity~\citep{cochrane2021sk}, information theory~\citep{salvi2021rough, salvi2023structure, shmelev2024sparse}, and quantum computing~\citep{crew2025quantum}. They underpin universality results for \emph{neural differential equations}~\citep{morrill2021neural, arribas2020sigsdes} and \emph{state-space models} (SSMs)~\citep{cirone2024theoretical, cirone2025parallelflow, walker2025structured}. In generative modelling, signatures have been used to synthesise financial time series~\citep{buehler2020data}, act as universal nonlinearities in Seq2Seq models~\citep{kidger2019deep}, and provide representation spaces for training \emph{score-based diffusion models}~\citep{barancikovasigdiffusions}. 
An introduction to signature methods in machine learning can be found in~\citep{cass2024lecture} and a survey of recent applications in~\citep{fermanian2023new}.
Various Python packages for signature computations are readily available, such as \texttt{esig}~\citep{esig}, \texttt{iisignature}~\citep{reizenstein2018iisignature}, \texttt{signatory}~\citep{signatory} and more recently \texttt{pySigLib}~\citep{shmelev2025pysiglib}. In this paper we made use of \texttt{pySigLib} because of its superior performance compared to the other alternatives.

\subsection{Contributions}

We leverage transportation-cost inequalities recently established by~\citet{gasteratos2023transportation} to derive an upper bound on the false positive rate $P_{\mathrm{I}}(r):=\mu(f(x)> r)$ for signature-based test statistics on path space. These tail estimates extend beyond Gaussian measures to a much broader class of reference measures $\mu$, including laws of solutions of RDEs with Gaussian drivers with bounded and sufficiently smooth vector fields. Our tail estimates yield theoretical upper bounds on the critical value $r(\alpha)$ for a desired level of significance~$\alpha$. When numerically tractable, these provide valid (or super-uniform) p-values for hypothesis tests, though potentially being more conservative and decreasing the power.

Beyond hypothesis testing, extreme events and tails of random variables are central in problems of risk management. For example, in finance, quantiles of loss distributions serve as value-at-risk (VaR) measures, while the \acrfull{cvar} at a given confidence level captures the expected loss given that the loss exceeds the VaR at that level. \acrshort{cvar} admits a dual formulation as the solution to a minimisation problem involving the max-function, which is non-differentiable. In this paper, we consider smooth surrogates where the max-function is replaced by a polynomial. Leveraging the shuffle product property of the signature, we show that for any such smooth surrogate, when $f(X)=\langle w, S(X)\rangle$ is our signature-based test statistic for the measure $\mu$, the resulting smooth \acrshort{cvar} admits an exact analytic expression in terms of the expected signature of $\mu$, whenever $\ex_\mu[|f(X)|]<\infty$. This yields new \acrlong{ocsvm} algorithms for novelty detection that directly optimise smooth CVaR objectives over the space of signature-based statistics. 

We derive lower bounds on the type-$\mathrm{II}$ error probability $P_{\mathrm{II}}(r)=\nu(f(x)\geq r)$, for any alternative $\nu$ that has finite first moment. These provide informative upper bounds on the power of the test against any alternative that has finite relative entropy $\Hent(\nu|\mu)$. 
While the transportation-cost inequalities from~\citet{gasteratos2023transportation} imply well-known tail estimates~\citep{cass2013integrability} and the type-$\mathrm{I}$ error bound can be retrieved using these prior results for a specific class of reference measures $\mu$, the lower bound on the type-$\mathrm{II}$ error fundamentally requires the new results in~\citep{gasteratos2023transportation}.

\subsection{Related work}

There exists a large body of machine learning methods for anomaly detection on time-series data. One prominent line of work has focused on extending well-established \emph{algorithms} for anomaly detection on vectorial data via functional or signature embeddings. For instance, Functional Isolation Forest (IF)~\citep{staerman2019functional} adapts Isolation Forest to function spaces, and Signature IF~\citep{staerman2024signature} further leverages rough path signatures to handle multivariate processes; nearest neighbours based methods using signature-derived features have been proposed~\citep{shao2020dimensionless} and One-class SVM has been equipped with multiscale signature features~\citep{mignot2024anomaly}. Another direction  leverages the representation learning capabilities of neural networks. Notably,~\citet{ruff2018deep, ruff2019self} combines deep learning with Support Vector Data Description (SVDD), where neural networks learn feature representations from high-dimensional data such as images or text. 

On the theoretical side,~\citet{cass2024variance} derives the distribution of Mahalanobis distances under Gaussian measures on Hilbert spaces, but a formal hypothesis testing procedure for anomaly detection on path space---especially under non-Gaussian signature embeddings---remains absent. Consequently, most approaches are evaluated using threshold-independent metrics such as the Area Under the Receiver Operating Characteristic curve (AUROC), which measure the ability of a score function to rank anomalies above Normal observations when labelled data is available. While useful for benchmarking, such metrics do not yield a principled framework for making and reporting discoveries at a controlled false positive rate. In this work, we close this gap by establishing explicit bounds on the type-$\mathrm{I}$ error for anomaly detection on path space under weaker distributional assumptions.

A related line of work by~\citet{bates2023testing} establishes confidence bounds on the false positive rate. These bounds hold with high probability over the random draw of the calibration set, providing stronger guarantees on the control of the false positive rate compared to the vanilla conformal inference approach. In contrast, the bounds we establish are not confidence intervals but deterministic tail bounds: they hold with full probability and guarantee type-$\mathrm{I}$ error control uniformly across thresholds. Moreover, the tightness of the upper confidence bound from~\cite{bates2023testing} which is of the form $h(F_n(r))$ where $F_n(r)=\frac{1}{n}\sum_{i=1}^{n}\mathds{1}[f(X^{(i)})>r]$ is limited by the size of the calibration set. 

Our results hold for probability measures satisfying suitable generalised Transportation-Cost Inequalities (TCIs) from~\cite{gasteratos2023transportation}, which imply well-known estimates for the $p$-variation of the law of solutions of RDEs driven by Gaussian rough paths~\citep{cass2013integrability}. Determining the tail behaviour of a variable of interest (here a test statistic for anomaly detection), i.e. the
rate at which the probability of exceeding increasingly high values decreases, is a central question in extreme value theory (EVT). However, in this field, the goal is to describe the statistics of observed extremes using asymptotic approximations. The number of available observations strongly affects the estimation of the shape parameter of the general distributions used in EVT, which ultimately determines the limiting type. Importantly, the Weibull distribution should not be confused with the third limiting type of the EVT, termed the \emph{reversed Weibull}. 

Another hypothesis testing problem that has been studied on path space is the two-sample test~\citep{wynne2022kernel}.~\cite{chevyrev2022signature} introduced a signature-based maximum mean discrepancy (MMD) framework for comparing path distributions, which is used for example in~\cite{andres2024signature} to compare sample paths of fractional Brownian motion. The two-sample setting tests whether two unknown distributions are equal based on samples from each. In this paper, we consider the one-sample goodness-of-fit problem testing whether a single observed path originates from a fixed reference distribution $\mu$. Additionally, while the test may be conducted in practice using samples from $\mu$, our theoretical framework provides new tail and \acrshort{cvar} estimates for the population-level measure $\mu$.

\section{Test statistics on path space}\label{sec:TestStatistics}

We provide a brief overview of commonly used anomaly scores on path space, which are given (or can be approximated) by a linear functional of the signature. These include distance-based score functions as well as solutions of one-class SVM optimisation problems. More details can be found in \Cref{sec:test_statistcs}. 

In the sequel we fix $T>0$ and consider bounded variation paths $C^{1\text{-var}}([0,T];\R^d)$, namely continuous paths $x:[0,T]\rightarrow\R^d$ with bounded total variation on each sub-interval $[s,t]\subset[0,T]$, i.e.
$$
\|x\|_{1\mbox{-var},[s,t]}
:= \sup_{D} \sum_{k}\|x_{t_{k+1}}-x_{t_k}\| < \infty,
$$

where $\|\cdot\|$ is the Euclidean norm and where the supremum is taken over all finite dissections $D = \{s = t_1 < \cdots < t_K = t\}$. We consider test statistics that are linear combination of signature coordinates of the path. The coordinate associated with a word $(i_1,\ldots, i_k)\in\{1,\ldots,d\}^k$ of length $k$ is 
\begin{align*}
    S^{i_1,\ldots,i_k}(x)=\int_{0<t_1<\cdots<t_k<T} dx^{i_1}_{t_1}\cdots dx^{i_k}_{t_k},
\end{align*}
where the integral is to be understood as the Riemann-Stieltjes integral. While the signature is an infinite formal series
\begin{align*}
S(x)=1+\sum_{k=1}^{\infty}\sum_{i_1,\ldots,i_k} S^{i_1,\ldots,i_k}(x)e_{i_1}\cdots e_{i_k}  \in T((\R^d))=\prod_{k=0}^\infty(\R^d)^{\otimes k},
\end{align*}
where $\{e_i:i=1, \ldots, d\}$ is a basis of $\R^d$, the truncated signature at level $N\geq 1$, corresponds to the collection of all signature terms associated to words of length $k \leq N$:
\begin{align}\label{eq:truncated_signature}
S_N(x)=1+\sum_{k=1}^{N}\sum_{i_1,\ldots,i_k} S^{i_1,\ldots,i_k}(x)e_{i_1}\cdots e_{i_k}  \in T^N(\R^d)=\bigoplus_{k=0}^{N}(\R^d)^{\otimes k}.
\end{align}
We consider elements $w$ of $T(\R^d)^*$, the dual space of the tensor algebra $T(\R^d)$. Products of iterated integrals can be re-expressed
as a linear combination of higher-order iterated integrals. More formally, the shuffle identity states that for any two linear functionals $w_1$ and $w_2$ in $T((\R^d))^*\cong T((\R^d)^*)$ (where $T((V))$ is the algebra of formal tensor series on the vector space $V$), then
\begin{equation}\label{eq:shuffle}
    \langle w_1, S(x)\rangle\langle w_2, S(x)\rangle = \langle w_1\shuffle w_2 , S(x)\rangle,
\end{equation}
where $\shuffle:T(\R^d)\times T(\R^d)\to T(\R^d)$ is the shuffle product; the reader is referred to \cite[Section 1.3.3]{cass2024lecture} for more details. 

The above may be extended to a broader class of paths, called geometric $p$-rough path, which are expressed as limits in $p$-variation of a sequence $(\pi_{\lfloor p\rfloor}(S(x^n)))$ of truncated signatures of bounded variation paths $(x^n)$, where $\pi_{\lfloor p\rfloor}$ is the projection map on $T^{\lfloor p\rfloor}(\R^d)$. We denote the space of geometric $p$-rough paths by $G\Omega_p(\R^d)$.

\subsection{One-class support vector machines}
\Glspl{ocsvm} are a widely used class of algorithms for anomaly detection. The central idea is to embed the input space $\X$ into a higher-dimensional space $\h$, with a Hilbert space structure, via a feature map $\varphi:\X\to \h$, and then learn a function $g:\mathcal{X}\to \mathbb{R}$ of the form 
\begin{align}\label{eq:ocsvm_fn}
    x\longmapsto g(x):= \langle w, \varphi(x)\rangle_{\h} -\rho, 
\end{align}
with $(w,\rho)\in \h\times \R$, that takes positive values within a region of the input space and negative values outside, thereby separating the “normal” data from potential anomalies. 

It is well understood that the range of the signature is a subset of a Hilbert space~\citep{salvi2021signature} and the same holds for their truncated versions~\citep{kiraly2019kernels}, making them compatible with this framework. More precisely, let $\{\langle \cdot, \cdot \rangle_{k}:k\in \mathbb N\}$ denote a sequence of inner products on the tensor spaces $(\statespace)^{\otimes k}$, where each $\langle \cdot,\cdot \rangle_k$ is the canonical (Hilbert-Schmidt) inner product derived from a fixed inner product on~$\statespace$, with the convention $(\statespace)^{\otimes 0}=\R$. The subspace  
\begin{equation}\label{eq:SigRKHS}
\h := \left\{  \tensora\in T((\statespace)) : \| \tensora \|_2 < \infty\right\}\subset T((\statespace)),
\end{equation}
with  $\rkhsnorm{\tensora} = \sqrt{\sum_{k=0}^\infty\|a^k\|^2_{k}}$
the $\ell_2$-norm with inner product $\langle A, B\rangle_{\h} =  \sum_{k=0}^{\infty}\langle a^k, b^k\rangle_{k}$, is a Hilbert space. 
For any space $\X$ of $p$-rough paths, the signature kernel \begin{align*}
\kappa(x,y) := \langle S(x), S(y)\rangle_\h\in\R,
\end{align*}
is well defined for all $x,y\in\X$. 
For a truncation level $N\in\mathbb{N}$, we define the truncated inner product $\langle A, B\rangle_{\h_N} := \sum_{k=0}^{N}\langle a^k, b^k\rangle_{k}$ and the associated truncated signature kernel
\begin{equation}\label{eq:truncated_signature_kernel}
\kappa_N(x,y) := \langle S_N(x), S_N(y)\rangle_{\h_N}, 
\end{equation}
which corresponds to the kernel induced by the truncated signature map $S_N$ in~\eqref{eq:truncated_signature}.
The reader may thus think of~$\varphi$ in~\eqref{eq:ocsvm_fn} as being the signature $S$ or its truncated version $S_N$, with $\h$ and $\h_N$ their respective feature spaces.

In the seminal paper by \cite{scholkopf2001estimating}, the hyperplane parametrised by $(w,\rho)$ is obtained on the basis of $n$ observations $\{x_i\}_{i=1}^{n}$ in $\X$ by solving the ``dual" optimisation problem 
\begin{align}\label{eq:ocsvm_optim}
    &\min_{\alpha\in \R^n}\frac{1}{2}\alpha^\top K \alpha, \\
     &\text{subject to } 0\leq \alpha_i \leq \frac{1}{\gamma n}, \quad\text{for all }i=1,\ldots, n\quad e^\top \alpha =1,
\end{align}

where the entries of the matrix $K$ are given by $[K]_{i,j}=\langle \varphi(x_i),\varphi(x_j)\rangle_{\h}$. This yields the acceptance region $\Omega_{\mathrm{ocsvm}} := \{x\in\X: \sum_{i=1}^{n}\alpha_i\kappa(x_i,x)\geq \rho\}$ where $\kappa(x,x')=\langle \varphi(x),\varphi(x')\rangle_{\h}$ and $\rho=\sum_{j=1}^{n}\alpha_j\kappa(x_j,x_i)$ for any $x_i$ such that $0<\alpha_i<\frac{1}{\gamma n}$. 

The effectiveness of \gls{ocsvm} largely depends on the chosen feature map. For novelty detection on path space with signatures, either truncated at some level $N>0$, or untruncated, one can use the kernel tricks developed in~\cite{kiraly2019kernels},~\cite{salvi2021signature} and~\cite{cirone2023neural} for evaluating inner products of signatures. Signature kernels have been applied to hypothesis testing \citep{salvi2021higher, lemercier2021distribution, horvath2023optimal}, to causality~\citep{mantensignature}, to quantitative finance \citep{pannier2024path, cirone2025rough}, and have even emerged as infinite-width limits of neural networks~\citep{cirone2023neural}. They also enable training neural SDEs for time-series generation in fluid dynamics~\citep{salvi2022neural}, computational neuroscience~\citep{holberg2024exact}, and again quantitative finance~\citep{issa2023non, diaz2023neural, hoglund2023neural}.

After solving~\eqref{eq:ocsvm_optim} via sequential minimal optimisation~\citep{chang2011libsvm}, evaluating the \gls{ocsvm} at a new path \(x\) requires computing the score $f(x)=\sum_{i=1}^n \alpha_i \kappa(x_i,x)$. In the case where the feature map is the truncated signature, the primal variables \(w=\sum_{i=1}^n \alpha_i S_N(x_i)\) may be precomputed and the function 
\begin{align*}
    f(x) = \langle w, S_N(x)\rangle_{\h_N}  
\end{align*} 
may then be evaluated in time complexity \(\Oo(d^NL_x)\), where \(d\) is the input path dimension and $L_x$ the number of observations. This amortised evaluation makes \gls{ocsvm} particularly attractive when many candidate paths are to be tested against the same measure~$\mu$.

\subsection{Distance to the expected signature}
If $\mu$ is a measure on path space with a well-defined expected signature $ \ex_{\mu} [S (X)]$, a natural test statistic is obtained by considering the non-negative function $f:\X\to \R_{\geq 0}$ defined by
\begin{align}\label{eq:distance_to_mean}
    f(x) := \rkhsnorm{S_N(x) - \ex_\mu [S_N(X)]},
\end{align}
that is, the distance in $\ell_2$-norm between the signature of \(x\), truncated at level $N$, and the expected truncated signature of the process with law \(\mu\). The function in~\eqref{eq:distance_to_mean} can be rewritten as a linear functional of the signature
\begin{align*}
    f(x)=\langle w, S(x)\rangle, 
    \quad\text{with } w = \sum_{k = 0}^{N} \sum_{I \in \mathcal{I}_k} \left(\langle e_I\shuffle  e_I, \cdot\rangle-
  2 \langle \ex_\mu [S_N(X)], \cdot \rangle + \rkhsnorm{\ex_\mu [S_N(X)]}^2\right),
\end{align*}
where $\mathcal{I}_k=\{1,\ldots,d\}^k$ is the set of multi-indices of length~$k$. 
It may also be rewritten in terms of the truncated signature kernel $\kappa_N$ from~\eqref{eq:truncated_signature_kernel} as 
\begin{align}\label{eq:kernel_trick_distance_to_mean}
    f(x)= \sqrt{\kappa_N(x,x)-2\ex_\mu[\kappa_N(X,x)]+\ex_{\mu}[\kappa_N(X,X')]]},
\end{align}
where \(X'\) is an independent copy of \(X\sim\mu\), and~\eqref{eq:kernel_trick_distance_to_mean} can be evaluated using a kernel trick. After precomputing the terms that do not depend on $x$, each (kernelised and direct) evaluation of the score has the same cost as the ones discussed for \glspl{ocsvm}.
One can generalise this construction by replacing the ambient $\ell_2$-norm with other norms. For example, a ``variance norm'' leads to a Mahalanobis-like distance in feature space. This perspective is developed in more detail in the next section. See also~\cite{cass2024variance} for a related kernelised treatment. 

\subsection{Conformance score}

One of the most widely used nonparametric approaches to anomaly detection is the nearest neighbor (NN) algorithm~\citep{bouman2024unsupervised}. 
Given a dataset of normal reference samples, the NN score of a new observation is defined by its distance to the closest reference point. 
For time series data, this requires specifying a metric on path space. 
A signature-based approach was introduced in~\citet{shao2020dimensionless}, where each path is embedded into a truncated signature, and distances are then computed in this feature space.  

Formally, for a subset \(C \subset \X\), the distance of a point \(x \in \X\) to \(C\) is defined as
\begin{equation*}
    \mathrm{dist}_{\X}(x, C) := \inf_{y \in C} \, \|x - y\|_{\X},
\end{equation*}
where $\|\cdot\|_{\X}$ is a metric on path space. Replacing paths by their signatures, distances are computed with respect to a chosen norm on the feature space $\h$. Given a measure $\mu$ on $\mathcal{X}$, choosing the \emph{variance norm} (Definition~\ref{def:variance_norm}) associated to the measure $\nu=\mu\circ S^{-1}$ on~$\h$, defined by 
\begin{align*}
\|x\|_{\nu\mbox{-}\text{cov}}:=\sup_{\substack{x^*\in \h^*\\ \mathrm{Cov}_\nu(x^*, x^*)\leq 1}} x^*(x),
\end{align*}
yields what~\citet{shao2020dimensionless} refer to as the \emph{conformance score}. In the case of a Gaussian reference distribution $\mu$, the variance norm $\|\cdot\|_{\mu}$ coincides with the norm of the corresponding Cameron-Martin space $\Hh.$ More details can be found in \Cref{ssec:conformance_score} where we develop a general theory of conformance scores. 
Additionally, by the universal approximation property of signatures \cite[Theorem 1.4.7]{cass2024lecture}, the conformance score can itself be reformulated as a linear functional of the signature, thus fitting naturally within the framework introduced above.

\section{Hypothesis testing for novelty detection on path space}

Having introduced signature-based test statistics, we now turn to their use in hypothesis testing. To determine critical values, we derive analytical approximations to their quantiles, exploiting algebraic properties of signatures together with concentration inequalities to obtain non-asymptotic bounds.

\subsection{Sample-free quantile estimates via expected signature}

The baseline approach to quantile estimation is Monte Carlo sampling, where the $\alpha$-quantile is obtained by inverting the empirical distribution function. The resulting estimator, given by the ${\lceil \alpha n\rceil}^{th}$ order statistic converges at a rate $\Oo(n^{-1/2})$~\citep{serfling2009approximation}. In risk management problems, it is well known that the quantile admits a variational characterisation as the minimiser of a \acrlong{cvar} functional. In financial applications, \acrshort{cvar} is often preferred to the value-at-risk (VaR), the quantile itself, due to its convexity and variational properties. Building on this observation, we show that smooth polynomial CVaR surrogates of the pushforward of a measure $\mu$ on path space under the evaluation map $x\mapsto \langle w, S_N(x)\rangle$ admits a deterministic characterisation in terms of the expected signature.

\begin{definition}[CVaR]
For any real-valued random variable
$Z$ with law $\mathbb P$, the conditional value-at-risk $\mathrm{CVaR}_{\alpha}(Z)$ is defined as the average of the $\alpha$-tail
of the probability distribution of $Z$ for a specified confidence level $\alpha \in [0,1]$, that is
\begin{align*}
    \mathrm{CVaR}_\alpha(Z) &:= \ex[Z|Z\geq \mathrm{VaR}_\alpha(Z)],
\end{align*}
where $\mathrm{VaR}_\alpha(Z):=\inf\{r|P_Z(z\leq r)\geq \alpha\}$ is the $\alpha$-quantile of $Z$.
\end{definition}
Its variational formulation~\citep{rockafellar2002conditional} makes it possible to calculate both VaR and CVaR simultaneously:
\begin{lemma}
    For any real-valued random variable $Z$ and any $\alpha \in [0,1)$, we have 
\begin{align*}
\mathrm{CVaR}_\alpha(Z) &=\min_{\eta\in \R} 
    \left\{\eta+ \frac{\ex[[Z-\eta]^+]}{1-\alpha}\right\}.
\end{align*}
\end{lemma}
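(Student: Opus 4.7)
The plan is to follow the classical Rockafellar--Uryasev argument: introduce the auxiliary functional
\[
\psi(\eta) := \eta + \frac{\ex[[Z-\eta]^+]}{1-\alpha},
\]
show that it is convex, identify its minimisers with the $\alpha$-quantile set of $Z$, and evaluate $\psi$ at any such minimiser to recover the conditional-tail expectation defining $\mathrm{CVaR}_\alpha(Z)$. Convexity is immediate since $\eta \mapsto [z-\eta]^+$ is convex for every $z$, so $\psi$ is the sum of a linear map and an expectation of convex functions (and finite, using $\ex|Z|<\infty$ which may be assumed without loss of generality by truncating $Z$ from above).

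Next I would compute the one-sided derivatives of $\psi$. By dominated convergence applied to the difference quotients of $\eta\mapsto[Z-\eta]^+$, one obtains
\[
\psi'(\eta^-) = 1 - \frac{\pr(Z\geq \eta)}{1-\alpha}, \qquad \psi'(\eta^+) = 1 - \frac{\pr(Z> \eta)}{1-\alpha}.
\]
By convexity, $\eta^\star$ is a minimiser iff $\psi'(\eta^{\star-})\leq 0 \leq \psi'(\eta^{\star+})$, which translates to
\[
\pr(Z<\eta^\star) \leq \alpha \leq \pr(Z\leq \eta^\star).
\]
This is precisely the defining property of the $\alpha$-quantile, so the set of minimisers is the (closed) interval of $\alpha$-quantiles of $Z$, and in particular $\eta^\star := \mathrm{VaR}_\alpha(Z)$ achieves the minimum.

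To finish, I would evaluate $\psi$ at $\eta^\star$. Writing $[Z-\eta^\star]^+ = (Z-\eta^\star)\mathds{1}\{Z\geq \eta^\star\}$ and using $\pr(Z\geq \eta^\star)=1-\alpha$ (valid at the $\alpha$-quantile when the distribution of $Z$ is continuous at $\eta^\star$),
\[
\psi(\eta^\star) = \eta^\star + \frac{\ex\!\left[(Z-\eta^\star)\mathds{1}\{Z\geq \eta^\star\}\right]}{1-\alpha} = \frac{\ex[Z\,\mathds{1}\{Z\geq \eta^\star\}]}{1-\alpha} = \ex[Z\mid Z\geq \eta^\star] = \mathrm{CVaR}_\alpha(Z).
\]

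The main obstacle is the atomic case, where $\pr(Z = \mathrm{VaR}_\alpha(Z))>0$ and the conditional-expectation definition of $\mathrm{CVaR}_\alpha$ is known to disagree with $\min_\eta \psi(\eta)$; the standard resolution (see \citet{rockafellar2002conditional}) is either to restrict to distributions continuous at the $\alpha$-quantile, or to adopt the modified definition
\[
\mathrm{CVaR}_\alpha(Z) = \tfrac{1}{1-\alpha}\bigl(\ex[Z\mathds{1}\{Z>\eta^\star\}] + \eta^\star(\pr(Z\geq\eta^\star)-(1-\alpha))\bigr),
\]
which coincides with the conditional expectation in the continuous case and makes the identity hold in general. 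I would state the lemma under the implicit convention of \citet{rockafellar2002conditional}, invoke their result, and refer the reader there for the technicalities at atoms.
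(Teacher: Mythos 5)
The paper does not prove this lemma; it simply states it as a cited result from \citet{rockafellar2002conditional}. Your proof correctly reconstructs the classical Rockafellar--Uryasev argument (convexity of the auxiliary functional $\psi$, one-sided derivatives identifying the minimiser set with the $\alpha$-quantile interval, evaluation of $\psi$ at the minimiser), and you rightly flag the caveat at atoms, where the paper's conditional-expectation definition of $\mathrm{CVaR}_\alpha(Z)$ may disagree with $\min_\eta \psi(\eta)$ unless one adopts the modified definition or assumes continuity of the distribution at the $\alpha$-quantile.
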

Let $n\in\N$ and $Q_n:\R\rightarrow\R$ be a polynomial approximation of the max-function $[\cdot]^+$ on any compact interval $[-K,K]\subset \R$ (such an approximation is guaranteed by the Stone-Weierstrass theorem in the compact-open topology) and consider the functions $f_\alpha, f^n_\alpha:\R\to \R$ defined for all $\rho\in\R$ by
$$
f_\alpha(\rho):= \rho + \frac{\ex[[Z-\rho]^+]}{1-\alpha}
\qquad\text{and}\qquad  f^n_\alpha(\rho):= \rho + \frac{\ex[Q_n(Z-\rho)]}{1-\alpha}, 
$$
so that the CVaR and the smoothed CVaR read 
$$
\mathrm{CVaR}_\alpha(Z) = \min_{\rho \in \R} f_\alpha(\rho)
\qquad\text{and}\qquad  \mathrm{CVaR}^n_\alpha(Z) = \min_{\rho \in[-K,K]} f^n_\alpha(\rho).
$$
By the shuffle product property~\eqref{eq:shuffle} of the signature---the multiplication of two linear forms on the signature is still a linear form on the signature---the stochastic optimisation for $\mathrm{CVaR}^n_\alpha(\langle w,S(X)\rangle) $ may be recast as the minimisation of a deterministic function of the expected signature of the process $X$. 

\begin{theorem}\label{thm:cvar_shuffle} Let $Q_n(x)=\sum_{i=0}^{n}a_ix^i$ be a polynomial of degree $n$ and $X$ a stochastic process with law $\mu$ and satisfying $\ex_\mu[S(X)]<\infty$.
For any linear functional $w\in (T^N(\R^d))^*$,
 \begin{align*}
\ex_\mu[Q_n(\langle w, S(X)\rangle-\rho)]=\langle Q_n^{\shuffle}(w-\rho\id), \ex_\mu[S(X)]\rangle,
\end{align*}
where $Q_n^{\shuffle}:(T^N(\R^d))^*\to (T^{nN}(\R^d))^*$ is defined by 
$Q_n^{\shuffle}(\ell) := \sum_{i=0}^{n} a_i \ell ^{\shuffle i}$ and 
\begin{align*}
\mathrm{CVaR}^n_\alpha(\langle w,S(X)\rangle) = \min_{\rho\in[-K,K]} \sum_{m=0}^{n} b_m \rho^m,
  \end{align*}
where the coefficients $b_m$ are given explicitly by
$$
b_m = \delta_{\{m=1\}} + \frac{1}{1 - \alpha} \sum_{i = m}^{n} a_i \binom{i}{m} (-1)^m \left\langle w^{\shuffle (i - m)}, \mathbb{E}_\mu[S(X)] \right\rangle. 
$$
\end{theorem}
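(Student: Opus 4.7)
The plan is to establish the first identity by combining linearity of the pairing with iterative applications of the shuffle identity~\eqref{eq:shuffle}, and then to derive the coefficients $b_m$ by expanding $(w-\rho\id)^{\shuffle i}$ via the binomial theorem in the commutative shuffle algebra and collecting powers of $\rho$.

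First I would observe that $\langle \id, S(X)\rangle = 1$, since the zeroth level of the signature is the constant $1$, and hence $\langle w, S(X)\rangle - \rho = \langle w - \rho\id, S(X)\rangle$. A straightforward induction using~\eqref{eq:shuffle} then yields $\bigl(\langle w - \rho\id, S(X)\rangle\bigr)^i = \langle (w - \rho\id)^{\shuffle i}, S(X)\rangle$ for each $i \in \{0, \ldots, n\}$, so summing against the $a_i$ gives the pathwise identity $Q_n(\langle w, S(X)\rangle - \rho) = \langle Q_n^{\shuffle}(w - \rho\id), S(X)\rangle$. Because $w \in (T^N(\R^d))^*$, the functional $Q_n^{\shuffle}(w-\rho\id)$ lies in $(T^{nN}(\R^d))^*$ and is supported on only finitely many signature levels; the assumption $\ex_\mu[S(X)]<\infty$ (levelwise, up to level $nN$) therefore allows me to exchange expectation and pairing, yielding the first identity.

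For the second part I would use that $\shuffle$ is commutative and that $\id$ is its unit ($\id\shuffle v = v$ for every $v$, and in particular $\id^{\shuffle m} = \id$). The binomial theorem in the shuffle algebra then gives $(w - \rho\id)^{\shuffle i} = \sum_{m=0}^{i} \binom{i}{m}(-\rho)^m w^{\shuffle(i-m)}$. Substituting this into the first identity and interchanging the order of summation in $i$ and $m$ produces
\begin{equation*}
\ex_\mu\bigl[Q_n(\langle w, S(X)\rangle - \rho)\bigr] = \sum_{m=0}^{n} (-1)^m \rho^m \sum_{i=m}^{n} a_i \binom{i}{m} \langle w^{\shuffle(i-m)}, \ex_\mu[S(X)]\rangle.
\end{equation*}
Inserting this into $f^n_\alpha(\rho) = \rho + \ex_\mu[Q_n(\langle w, S(X)\rangle - \rho)]/(1-\alpha)$ and absorbing the leading additive $\rho$ into the $m=1$ coefficient (which accounts for the $\delta_{\{m=1\}}$ term in the claimed formula for $b_1$) expresses $f^n_\alpha$ as the polynomial $\sum_{m=0}^n b_m \rho^m$ with exactly the stated coefficients; minimising over $\rho \in [-K,K]$ then yields $\mathrm{CVaR}^n_\alpha(\langle w, S(X)\rangle)$.

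The argument is essentially bookkeeping and I do not expect a substantive obstacle. The two points requiring care are (i) the exchange of expectation and pairing in the first step, which is legitimate because $Q_n^{\shuffle}(w - \rho\id)$ couples to only finitely many signature levels and those are integrable under $\mu$ by hypothesis; and (ii) the invocation of the binomial theorem for $\shuffle$, which relies on commutativity of the shuffle product together with the unit property $\id\shuffle v = v$, both standard features of the shuffle algebra.
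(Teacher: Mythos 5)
Your argument is correct and follows essentially the same route as the paper's proof: rewrite $\langle w,S(X)\rangle-\rho=\langle w-\rho\id,S(X)\rangle$, invoke the shuffle identity to linearise the polynomial, expand $(w-\rho\id)^{\shuffle i}$ binomially using commutativity and the unit $\id$, and collect coefficients of $\rho^m$. You are slightly more explicit than the paper about why the pathwise identity passes to expectations (finitely many levels plus integrability), but the underlying computation and bookkeeping are the same.
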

\begin{proof}
First, we rewrite the \acrshort{cvar} objective using the shuffle product property~\eqref{eq:shuffle} as 
\begin{align*}
    f^n_\alpha(\rho) &= \rho + \frac{\ex_\mu[Q_n(\langle w,S(X)\rangle-\rho)]}{1-\alpha} \\ 
      &=\rho + \frac{\ex_\mu[Q_n(\langle w-\rho  \id,S(X)\rangle)]}{1-\alpha}\\
      &=\rho + \frac{\ex_\mu[\sum_{i=0}^{n}a_i(\langle w-\rho  \id,S(X)\rangle)^i]}{1-\alpha} \\ 
      &=\rho + \frac{\ex_\mu[\sum_{i=0}^{n}a_i\langle (w-\rho  \id)^{\shuffle i},S(X)\rangle]}{1-\alpha} \\ 
      &=\rho + \frac{\langle \sum_{i=0}^{n}a_i (w-\rho  \id)^{\shuffle i},\ex_\mu[S(X)]\rangle}{1-\alpha}  \\ 
      &= \rho + \frac{\sum_{i=0}^{n}a_i \sum_{k=0}^{i}\binom{i}{k} (-\rho)^{i - k}\langle w^{\shuffle k},\ex_\mu[S(X)]\rangle}{1-\alpha}.
\end{align*}
Then, $f^n_\alpha$ can be rewritten as a polynomial $\sum_{m=0}^{n}b_m \rho^m$ in $\rho$ with coefficients $b_m$ given by
\begin{align*}
    b_m = \delta_{m=1} + \frac{1}{1 - \alpha} \sum_{i = m}^{n} a_i \binom{i}{m} (-1)^m \left\langle w^{\shuffle (i - m)}, \mathbb{E}_\mu[S(X)] \right\rangle.
\end{align*}
\end{proof}
\cite{tsyurmasto2014value} showed that \glspl{ocsvm} can be interpreted as minimising the regularised \acrshort{cvar} (see \Cref{sec:ocsvm_appendix}):
\begin{equation}\label{eq:cvar_opt}
w^*=\arg\min_{w\in \h}\left\{ \mathrm{CVaR}_{\alpha}(-\langle w, \varphi(X)\rangle_{\h})+\frac{1}{2} \|w\|^2_{\h}\right\}.
\end{equation}
If $\varphi$ is chosen as the truncated signature~$S_N$ and \acrshort{cvar} is replaced by a smooth polynomial surrogate, \Cref{thm:cvar_shuffle} allows us to reformulate the stochastic quadratic programming problem in~\eqref{eq:cvar_opt} into a more tractable optimisation task, that amounts to minimising a polynomial
in the variable $\rho$ with coefficients expressed in terms of the expected signature $\ex_\mu[S(X)]$ of the stochastic process $X$.

\subsection{Probabilistic quantile estimates via  transportation-cost inequalities}\label{ssec:probabilistic_estimates}
Throughout this section, we fix $\gamma\in(0, 1)$ and denote by $C^\gamma([0,1];\R^d)$ the Banach space of $\gamma$-H\"older continuous paths $x:[0,1]\rightarrow\R^d$ endowed with the norm  $$
\|x\|_\gamma:=\sup_{t\in[0,1]}|x(t)|+\sup_{t\neq s}\frac{|x(t)-x(s)|}{|t-s|^{\gamma}}.
$$ For technical reasons, we define $C^\gamma([0,1];\R^d)$ as the completion of smooth paths under the H\"older norm which is a Polish space (separable, completely metrisable topological space). We will often write $C^\gamma$ instead of $C^\gamma([0,1];\R^d)$ when there is no risk of confusion.  The space of Borel probability measures on a Polish space $\X$ is denoted by $\mathscr{P}(\X).$
In order to derive error bounds for hypothesis testing, we will assume that reference measures $\mu$, under the null hypothesis, satisfy Transportation-Cost Inequalities (TCIs),
which we now recall, 
and we refer the reader to~\citep{gasteratos2023transportation} for more details on these functional inequalities.

\begin{definition}\label{dfn:Wasserstein} Let $\mathcal{X}$ be a Polish space, 
$c:\mathcal{X}\times\mathcal{X}\rightarrow [0,\infty]$ a measurable function 
and $\mu, \nu\in\mathscr{P}(\mathcal{X})$. 
\begin{enumerate}
\item The transportation cost between~$\mu$ and~$\nu$ with respect to the cost function~$c$ reads
$$
W_{c}(\mu, \nu) :=\inf_{\pi\in\Pi(\mu,\nu)}\iint_{\mathcal{X}\times\mathcal{X}}c(x,y)d\pi(x,y),
$$
where $\Pi(\mu,\nu)$ is the collection of couplings between~$\mu$ and $\nu$:
$$
\Pi(\mu,\nu):=\bigg\{\pi\in\mathscr{P}(\mathcal{X}\times\mathcal{X}): [\pi]_1=\mu,\;[\pi]_2=\nu \bigg\}.
 $$
\item The relative entropy of~$\nu$ with respect to~$\mu$ is given by
\begin{equation*}
	\Hent(\nu\;|\; \mu) :=
 \left\{
 \begin{array}{ll}
	\displaystyle
	\large \int_{\mathcal{X}}\log\left(\frac{d\nu}{d\mu}\right)d\nu, & \text{if }\nu\ll\mu,\\
 +\infty, & \text{otherwise}.
\end{array}
\right.
\end{equation*}
\end{enumerate}
\end{definition}

The following TCIs generalise Talagrand's classical transportation-cost inequality (\cite{talagrand1996transportation}). These functional inequalities imply heavier-than-Gaussian tail bounds and lead to informative error bounds for hypothesis tests that are expressed with respect to relative entropies between the null and alternative hypothesis measures.

\begin{definition}\label{def:TCI} 
Let $\mathcal{X}$ be a Polish space,   $\mu\in\mathscr{P}(\mathcal{X})$,  $c:\mathcal{X}\times\mathcal{X}\rightarrow[0,\infty]$ a measurable function with $c(x,x)=0$ for all~$x\in\mathcal{X}$ and $a:[0,\infty]\rightarrow[0,\infty]$ a lower semicontinuous function with $a(0)=0$.
We say that~$\mu$ satisfies the $(a, c)$-TCI (and write $\mu\in\mathscr{T}_{a}(c)$) with cost function~$c$ and deviation function~$a$ if, 
for all $\mathscr{P}(\mathcal{X})\ni\nu\ll\mu$,
\begin{equation}\label{alphap}
a\bigg(W_c(\mu, \nu)\bigg)\leq \Hent(\nu\;|\; \mu).
\end{equation}
\end{definition}

For the rest of this section we make the following standing assumption: 

\begin{assumption}\label{assumption:mu} For $\gamma\in(0,1)$ we take $\X=C^\gamma([0,1];\R^d).$ The reference probability measure $\mu\in\mathscr{P}(C^\gamma([0,1];\R^d))$ satisfies the $(a, c)$-TCI with cost function $$
c(x,y):=\|x-y\|^p_\gamma,
\qquad\text{for all }
x,y\in C^\gamma
$$
and a continuous, strictly increasing \textit{deviation function} $a:[0,\infty]\rightarrow[0,\infty]$ with $a(0)=0$.
    \end{assumption}

 For a sample path $X$ we consider the hypothesis test: 
\begin{equation}\label{eq:HypothesisTestTCI}
    \begin{aligned}
        &H_0: X\sim\mu,\\
        &
        H_1: X\sim\nu\neq\mu,
    \end{aligned}
\end{equation}
where $\nu\in \mathscr{P}(C^\gamma([0,1];\R^d))$ is an arbitrary probability measure on $\X.$ The test statistic is chosen to be the decision function $F:C^\gamma([0,1];\R^d)\rightarrow \{0,1\}$
defined by 
 $$
F(x):=\mathds{1}_{\{\langle w, S_N(x)\rangle> r\}}(x),
\qquad\text{for } x\in C^\gamma\left([0,1];\R^d\right), N\in\N,
$$
where $w\in\h$ lies in the signature RKHS and solves the corresponding one-class SVM optimisation problem and $r\in\R$ is arbitrary but will later be chosen to be an $\alpha$-quantile as in~\eqref{eq:region_cvar}. This class of test statistics includes all of the examples described in Section~\ref{sec:test_statistcs}.
The type-$\mathrm{I}$ and type-$\mathrm{II}$ errors are thus given by
\begin{equation}\label{eq:typeI}
P_{\mathrm{I}}:=\pr\bigg( F(X)=1| X\sim\mu   \bigg)=\mu\bigg( \langle w, S_N(x)\rangle> r    \bigg),
\end{equation}

\begin{equation}\label{eq:typeII}
P_{\mathrm{II}}:=\pr\bigg( F(X)=0| X\sim\nu   \bigg)=\nu\bigg( \langle w, S_N(x)\rangle\leq  r    \bigg).
\end{equation}

For the rest of this section we assume that $r>0$ and derive upper and lower bounds for the type-$\mathrm{I}$ and type-$\mathrm{II}$ errors in Theorems~\ref{thm:Type-II_error}, \ref{thm:TypeIerror} respectively. 
The case $r<0$ will be discussed in Remark~\ref{rem:rnegative} below.
Given $w, r$, Theorem~\ref{thm:TypeIerror} provides the upper bound for the type-$\mathrm{I}$ error:
$$
P_{\mathrm{I}}\leq C_2
\bigg[\exp\bigg\{-\frac{C^2_1}{2}\bigg(\frac{r}{\sqrt{N}d^{N}C^{N/2}\|w\|_{\h}}\bigg)^{2p}\vee \bigg(\frac{r}{\sqrt{N}d^{N}C^{N/2}\|w\|_{\h}}\bigg)^{\frac{2p}{N}}      \bigg\}\bigg], 
$$
where $d$ is the dimension of the paths, $C_1, C_2$ are absolute constants that depend on the exponential moments of $\mu$ and the deviation function $a$ is assumed to be super-quadratic near the origin (an assumption that is satisfied in all the examples of interest).

\begin{example}[RDEs with Gaussian drivers]\label{example:RDEs}
An important class of measures that satisfy $(a, c)$-TCIs is given by laws of solutions to Rough Differential Equations (RDEs) driven by Gaussian rough paths. In particular, let $\beta\in(\tfrac{1}{3}, 1)$ and $X$ be a Gaussian process with almost surely $\beta$-H\"older continuous paths that has a natural geometric $\beta$-rough path lift~$\mathbf{X}.$
If the corresponding Cameron-Martin space satisfies 
$\h\hookrightarrow C^{q\textnormal{-var}}([0,T];\R^d)$
for some~$q>0$, with $\frac{1}{q}+\beta>1$, then the law $\mu\in\mathscr{P}(C^{1/\beta\textnormal{-var}}([0,T];\R^d))$ of the solution $Y$ of the RDE 
$$
dY_t=V(Y_t)d\mathbf{X}_t,
$$
where $V$ is a Lipschitz vector field with constant $\gamma>1/\beta$, satisfies a $\mathcal{T}_a(c)$-inequality~\cite[Theorem~3.3]{gasteratos2023transportation} with
$$
c(\mathbf{x},\mathbf{y})=\|\mathbf{x}-\mathbf{y}\|_{\frac{1}{\beta}\text{-var}}^{\frac{1}{q}}
\qquad\text{and}\qquad
a(t)=t^2\wedge t^{2q}.
$$
In view of~\cite[Corollary 3.5]{gasteratos2023transportation}, the latter implies the tail estimates in~\citep{cass2013integrability}. While the results in this section do not depend on the particular choice of path topologies, the aforementioned TCI for solution laws of RDEs is valid on finite $p$-variation topologies as it leverages complementary Young regularity. Thus, all our results can be applied to this measure mutatis mutandis i.e. by substituting the H\"older space and norm in Assumption~\ref{assumption:mu} and proofs by the corresponding $1/\beta$-variation topology. 
\end{example}

\noindent From the fact that $\mu\in\mathscr{T}_a(c)$ we will establish lower bounds for the type-$\mathrm{II}$ error. 
To do so, we rely on the following dual characterisation of the transportation cost~$W_c$.
\begin{lemma}\label{lem:Wcdual} Let $p\in(0,1]$, $\gamma\in(0,1)$ as in Assumption~\ref{assumption:mu} and 
$$
\Cc^{p,\gamma}:=\bigg\{ f: C^\gamma([0,1];\R^d)\rightarrow\R: \inf_{x\neq y\in C^\gamma}\frac{|f(x)-f(y)|}{\|x-y\|^p_\gamma}\leq 1   \bigg\}.$$
Then, for any $\mu, \nu\in \mathscr{P}(C^\gamma([0,1];\R^d))$ with finite first moments,
\begin{equation*}
W_c(\mu, \nu) = \sup_{f\in \Cc^{p,\gamma} }\bigg\{\ex_{\mu}[f]-\ex_{\nu}[f]\bigg\}.
\end{equation*}
\end{lemma}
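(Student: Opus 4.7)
The plan is to recognise this as an instance of the Kantorovich--Rubinstein duality for optimal transportation when the cost function is a lower semicontinuous metric, and then to verify that $c(x,y) = \|x-y\|_\gamma^p$ genuinely is a metric on $C^\gamma$ so that the classical duality applies.

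The first step is to show that $c$ is a metric on $C^\gamma([0,1];\R^d)$. Symmetry and the identity $c(x,x) = 0$ are immediate. The key point is the triangle inequality: here I would use the fact that the map $t \mapsto t^p$ is concave on $[0,\infty)$ for $p \in (0,1]$, so it is subadditive, giving
\begin{equation*}
\|x-z\|_\gamma^p \leq (\|x-y\|_\gamma + \|y-z\|_\gamma)^p \leq \|x-y\|_\gamma^p + \|y-z\|_\gamma^p.
\end{equation*}
Since $\|\cdot\|_\gamma$ induces the Polish topology on $C^\gamma$ and $c$ is continuous in both arguments (hence lower semicontinuous), the pair $(C^\gamma, c)$ fits the hypotheses of the standard Kantorovich--Rubinstein theorem.

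The second step is to invoke this theorem: for any pair $\mu, \nu \in \mathscr{P}(C^\gamma)$ for which the cost is finite in expectation, we have
\begin{equation*}
W_c(\mu, \nu) = \sup_{f} \left\{ \ex_\mu[f] - \ex_\nu[f] \right\},
\end{equation*}
where the supremum is taken over functions that are $1$-Lipschitz with respect to the metric $c$, i.e.\ those $f$ satisfying $|f(x)-f(y)| \leq c(x,y) = \|x-y\|_\gamma^p$ for all $x,y$. This is exactly the defining condition of the class $\Cc^{p,\gamma}$, and so the right-hand side matches the statement of the lemma.

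The third step is to confirm that the finite first moment assumption (with respect to $\|\cdot\|_\gamma$) is sufficient. Any $f \in \Cc^{p,\gamma}$ satisfies $|f(x)| \leq |f(x_0)| + \|x-x_0\|_\gamma^p$ for a fixed $x_0 \in C^\gamma$, and since $p \leq 1$ one has $\|x-x_0\|_\gamma^p \leq 1 + \|x-x_0\|_\gamma$, so functions in $\Cc^{p,\gamma}$ are $\mu$- and $\nu$-integrable and the supremum on the right is finite. This, together with the existence of at least one coupling (e.g.\ the product) with finite expected cost, ensures the duality does not degenerate into $\infty = \infty$.

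The main obstacle, if any, is a purely bookkeeping one: choosing a precise version of the Kantorovich--Rubinstein duality that applies to a lower semicontinuous metric on a non-compact Polish space under only a first-moment assumption (rather than invoking compactness or boundedness). The cleanest route is to cite the general duality result from Villani's monograph for metric costs on Polish spaces, after noting that the condition $p \in (0,1]$ is precisely what makes $c$ a metric and hence allows the Lipschitz-dual formulation (as opposed to the Kantorovich dual involving $c$-concave potentials, which would be needed for $p > 1$).
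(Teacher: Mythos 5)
Your proof is correct, and it reaches the conclusion by a genuinely different route from the paper's. You observe that the subadditivity of $t\mapsto t^p$ for $p\in(0,1]$ makes $c(x,y)=\|x-y\|_\gamma^p$ itself a metric on $C^\gamma$, inducing the same (Polish) topology, and then invoke the Kantorovich--Rubinstein duality for metric costs as a black box, checking the required first-moment condition for $c$ via $\|x-x_0\|_\gamma^p \le 1+\|x-x_0\|_\gamma$. The paper, by contrast, starts from the general two-potential Kantorovich duality (citing Gozlan--L\'eonard), and then manually carries out the $c$-transform computation: it shows that for fixed $g$ the infimal convolution $f^*(y)=\inf_x\{\|x-y\|_\gamma^p-g(x)\}$ is integrable, lies in $\Cc^{p,\gamma}$, and is pointwise optimal, and that for $f\in\Cc^{p,\gamma}$ the optimal conjugate is $g^*=-f$. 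This is precisely a re-derivation of Kantorovich--Rubinstein for this particular cost, and the algebraic step where the paper uses $x^p - y^p \le |x-y|^p$ to show $f^*\in\Cc^{p,\gamma}$ is the same subadditivity fact you use to prove the triangle inequality for $c$. Your approach is more economical and makes explicit the structural reason the Lipschitz dual is available (namely that $c$ is a metric precisely when $p\le 1$), while the paper's argument is more self-contained and avoids appealing to the full Kantorovich--Rubinstein statement. Both are sound; the only thing to be careful about in your version is citing a version of Kantorovich--Rubinstein stated for lower semicontinuous metrics on non-compact Polish spaces under a first-moment hypothesis (Villani's treatment does cover this), which you already flag as the main bookkeeping point.
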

\begin{proof} By Kantorovich duality~\cite[Theorem 2.2]{gozlan2010transport},
$$W_c(\mu,\nu)=\sup\bigg\{   \ex_\mu[f]+\ex_\nu[g] \bigg| f\in L^1(\mu), g\in L^1(\nu), f(x)+g(y)\leq c(x,y)=\|x-y\|^p_{\gamma}\bigg\}.        $$
For any  fixed $g\in L^1(\mu)$, 
using triangle and Young's product inequalities, the function 
$$
f^*(y) := \inf_{x\in C^\gamma}\bigg\{ \|x-y\|^p_{\gamma}-g(x) \bigg\}$$
satisfies
\begin{enumerate}
\item With $(1/p)^*:=(1-p)^{-1}$ and for all $y\in C^\gamma([0,1];\R^d)$ we have $f^*(y)\leq \|y\|^p_{\gamma}-g(0)\leq p\|y\|_\gamma+\frac{1}{(1/p)^*}+|g(0)|$, so that $f^*\in L^1(\mu)$ since~$\mu$ has a finite first moment;
\item For any $x_2\neq x_1\in C^\gamma$, $|f^*(x_2)-f^*(x_1)|\leq \|x_2-x_1\|^p_\gamma$, so that $f^*\in \Cc^{p,\gamma}$. 
Indeed, for any $\delta>0$ there exists $\bar{x}\in C^\gamma$ such that 
$f^*(x_1)>\|\bar{x}-x_1\|_{\gamma}^p-g(\bar{x})-\delta$.
Thus, 
$$
f(x_2)-f(x_1)\leq 
\|\bar{x}-x_2\|_{\gamma}^p - g(\bar{x}) - \|\bar{x}-x_1\|_{\gamma}^p + g(\bar{x})+\delta\leq \|x_1-x_2\|_{\gamma}^p+\delta,
$$
since the inequality $x^p-y^p\leq |x-y|^p$ holds for real numbers $x,y\geq 0$; we also used the triangle inequality for the norm $\|\cdot\|_\gamma$ and the monotonicity of the function $x\mapsto x^p.$ Interchanging the roles of $x_1, x_2$ we obtain the reverse inequality 
$$ f(x_2)-f(x_1)>-\|x_2-x_1\|^p_\gamma;$$
\item Out of all functions $f\in\mathscr{C}^{p,\gamma}$ that satisfy $f(x)+g(y)\leq c(x,y)$ we have
$\ex_\mu[f]\leq \ex_\mu[f^*]$.
\end{enumerate}
From these facts we deduce that 
\begin{equation}\label{eq:Wassersteindual}
\begin{aligned}
W_c(\mu,\nu)
&=\sup_{g\in\mathscr{C}^{p,\gamma}}\sup_{\substack{f\in\mathscr{C}^{p,\gamma}\\ f\oplus g\leq c}} \bigg\{ \ex_\mu[f]+\ex_\nu[g]  \bigg\}\\
& = \sup_{g, f\in \Cc^{p,\gamma}}\sup_{f\oplus g\leq c} \bigg\{ \ex_\mu[f]+\ex_\nu[g]  \bigg\}\\
&=\sup_{f\in\mathscr{C}^{p,\gamma}}\sup_{\substack{g\in\mathscr{C}^{p,\gamma}\\ f\oplus g\leq c}}  \bigg\{ \ex_\mu[f]+\ex_\nu[g]  \bigg\}.
\end{aligned}
\end{equation}
Now, for any fixed $f\in \Cc^{p,\gamma}$, we can optimise the last equality by taking $$
g(y)=g^*(y):=\inf_{x\in C^\gamma}\{ \|x-y\|^p_\gamma-f(x)      \}
$$ as before. 
On the one hand, 
$g^*(y)\leq -f(y)$ by definition of $g^*$;
on the other hand, since $f\in \Cc^{p,\gamma}$ we must have, for all $x,y$, 
$\|x-y\|^p_\gamma-f(x)+f(y)\geq 0$. This implies
$g^*(y)=\inf_{x\in C^\gamma}\{ \|x-y\|^p_\gamma-f(x)      \}\geq -f(y)$
namely $g^*=-f$.
Returning to~\eqref{eq:Wassersteindual} we obtain
$$
W_c(\mu,\nu)=  \sup_{f\in \Cc^{p,\gamma}, g=g^*} \bigg\{ \ex_\mu[f]+\ex_\nu[g]  \bigg\} = \sup_{f\in \Cc^{p,\gamma}} \bigg\{ \ex_\mu[f]-\ex_\nu[f]  \bigg\},$$
which completes the proof.
\end{proof}

\noindent With this characterisation of $W_c$ we derive a lower bound for the type-$\mathrm{II}$ error using the following:
\begin{corollary}\label{cor:momentBound} 
Let $\mu$ satisfy Assumption~\ref{Assumption:iid}. Then for any other measure $\nu\in\mathscr{P}(C^\gamma)$, with finite first moments and any $f\in \Cc^{p,\gamma}$,
$$
\ex_\nu[f]\leq a^{-1}\bigg(  \Hent(\nu|\mu)   \bigg)+\ex_\mu[f].
$$
\end{corollary}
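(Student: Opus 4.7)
The plan is to combine the dual characterization of $W_c$ from the preceding lemma with the TCI satisfied by $\mu$. The key observation is that the class $\Cc^{p,\gamma}$ is invariant under $f \mapsto -f$: indeed, the seminorm $|f(x)-f(y)|/\|x-y\|^p_\gamma$ is symmetric in the value of $f$. Thus, for any $f \in \Cc^{p,\gamma}$, we also have $-f \in \Cc^{p,\gamma}$.

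First, I would apply Lemma~\ref{lem:Wcdual} to $-f$. Since $-f \in \Cc^{p,\gamma}$ and the supremum in the dual formula is attained over this class, we obtain
$$
\ex_\nu[f] - \ex_\mu[f] = \ex_\mu[-f] - \ex_\nu[-f] \leq \sup_{g \in \Cc^{p,\gamma}}\bigl\{\ex_\mu[g] - \ex_\nu[g]\bigr\} = W_c(\mu,\nu).
$$
This requires that $f$ (and hence $-f$) be $\mu$- and $\nu$-integrable; since both measures have finite first moment and $f \in \Cc^{p,\gamma}$ grows at most like $\|x\|_\gamma^p \leq p\|x\|_\gamma + (1/p)^*{}^{-1}$ (by Young's inequality, exactly as in the proof of the lemma), this is automatic.

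Next, since $\mu \in \mathscr{T}_a(c)$, the TCI~\eqref{alphap} yields
$$
a\bigl(W_c(\mu,\nu)\bigr) \leq \Hent(\nu|\mu).
$$
Because $a$ is continuous and strictly increasing with $a(0)=0$ (by Assumption~\ref{assumption:mu}), its inverse $a^{-1}$ is well defined and monotone on the range of $a$, giving
$$
W_c(\mu,\nu) \leq a^{-1}\bigl(\Hent(\nu|\mu)\bigr).
$$
Combining the two displays delivers the stated bound $\ex_\nu[f] \leq a^{-1}(\Hent(\nu|\mu)) + \ex_\mu[f]$.

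There is no real obstacle here: everything is a one-line consequence of the lemma plus the TCI, once one notes the sign symmetry of $\Cc^{p,\gamma}$. The only point that would merit a brief sentence in the write-up is the integrability of $f$ under $\nu$, which follows from the same Young-type estimate used in the proof of Lemma~\ref{lem:Wcdual} together with the assumption that $\nu$ has a finite first moment.
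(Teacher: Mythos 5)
Your proof is correct and follows essentially the same route as the paper: Lemma~\ref{lem:Wcdual} gives the dual bound, and the TCI together with monotonicity of $a^{-1}$ closes the argument. The only cosmetic difference is that where the paper invokes ``the symmetry of $W_c$'' (which holds because the cost $c(x,y)=\|x-y\|_\gamma^p$ is symmetric in its arguments, so $W_c(\mu,\nu)=W_c(\nu,\mu)$), you derive the same conclusion on the dual side by observing that $\Cc^{p,\gamma}$ is closed under negation — these are two equivalent faces of the same observation. Your extra sentence on $\nu$-integrability of $f$ via the Young-type growth bound is a small but worthwhile addition that the paper leaves implicit.
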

\begin{proof} The bound follows directly by the symmetry of $W_c$, Lemma~\ref{lem:Wcdual} and the fact that $\mu\in\mathscr{T}_a(c)$ if and only if $a(W_c(\nu, \mu))\leq \Hent(\nu|\mu)$ per Definition~\ref{def:TCI}
\end{proof}

We are now ready to state novel type-$\mathrm{II}$ errors for the hypothesis test~\eqref{eq:HypothesisTestTCI} in terms of the relative entropy.

\begin{theorem}[Type-II error] 
\label{thm:Type-II_error}
With $p,\gamma, c, \mu$ as in Assumption~\ref{assumption:mu} and $w, N\in\N, r>0$ as in~\eqref{eq:typeII} and for any $\nu\in\mathscr{P}(C^\gamma)$ with finite first moment, the following bounds hold:
\begin{equation*}   P_{\mathrm{II}}\geq 1-\bigg(\frac{\|w\|_{\h}}{r}\bigg)^{\frac{p}{N}}Cd^pN^{\frac{1}{2Np}}\bigg\{ 1-\frac{1}{N}+ \frac{1}{N}\bigg(a^{-1}\big(\Hent(\nu|\mu)\big)  +\ex_\mu[\|X\|^{p}_\gamma]\bigg)  \bigg\}\;,
\end{equation*}
if $a^{-1}\big(\Hent(\nu|\mu)\big)  +\ex_\mu[\|X\|^{p}_\gamma]\leq 1$,
and
\begin{equation*}
   P_{\mathrm{II}}\geq
  1-\bigg(\frac{\|w\|_{\h}}{r}\bigg)^{\frac{p}{N}}Cd^pN^{\frac{1}{2Np}}\bigg\{ a^{-1}\big(\Hent(\nu|\mu)\big)  +\ex_\mu[\|X\|^{p}_\gamma]  \bigg\}\;,\; \textnormal{if}\;a^{-1}\big(\Hent(\nu|\mu)\big)  +\ex_\mu[\|X\|^{p}_\gamma]> 1, 
\end{equation*}
where $d$ is the dimension of the path~$X$ and $C>0$ a constant independent of $d$.
\end{theorem}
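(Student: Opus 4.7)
The plan is to reduce the tail $\nu(\langle w,S_N(X)\rangle>r)$ to a moment of $\|X\|_\gamma^{p}$ under $\nu$, and then push that moment back to $\mu$ via Corollary~\ref{cor:momentBound}. First, by Cauchy--Schwarz in $\h_N$ together with the standard factorial control of iterated integrals of a H\"older path, there exists a dimensional constant $K_{d,N}$ (of the same flavour as the one appearing in Theorem~\ref{thm:TypeIerror}) such that
$$
|\langle w,S_N(x)\rangle|\;\leq\;\|w\|_{\h}\,K_{d,N}\,\|x\|_\gamma^{N}.
$$
Tracking the powers in this bound should identify $K_{d,N}^{p/N}$ with $Cd^{p}N^{1/(2Np)}$.

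The inclusion $\{|\langle w,S_N(X)\rangle|>r\}\subseteq\{\|X\|_\gamma>(r/(\|w\|_{\h}K_{d,N}))^{1/N}\}$ combined with Markov's inequality applied to the random variable $\|X\|_\gamma^{p}$ (legitimate since $p\in(0,1]$) yields
$$
\nu\bigl(\langle w,S_N(X)\rangle>r\bigr)\;\leq\;\Bigl(\tfrac{\|w\|_{\h}K_{d,N}}{r}\Bigr)^{p/N}\ex_\nu[\|X\|_\gamma^{p}].
$$
The crucial observation for transferring this moment to the null measure is that $\phi(x):=\|x\|_\gamma^{p}$ belongs to the test class $\Cc^{p,\gamma}$ of Lemma~\ref{lem:Wcdual}. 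Indeed, combining the reverse triangle inequality with the elementary subadditivity bound $|a^{p}-b^{p}|\leq|a-b|^{p}$ valid for $a,b\geq 0$ and $p\in(0,1]$ gives $|\phi(x)-\phi(y)|\leq\|x-y\|_\gamma^{p}$. Corollary~\ref{cor:momentBound} therefore produces $\ex_\nu[\|X\|_\gamma^{p}]\leq E$, where I abbreviate $E:=a^{-1}\bigl(\Hent(\nu|\mu)\bigr)+\ex_\mu[\|X\|_\gamma^{p}]$.

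Taking complements yields the $E>1$ form of the statement directly. The $E\leq 1$ form follows from the elementary relaxation $E\leq 1-1/N+E/N$, which is equivalent to $(1-1/N)(E-1)\leq 0$ and hence valid precisely in that regime; this produces the cleaner ``in-the-ball'' expression of the theorem. The main obstacle I anticipate is the first step: producing a sharp constant $K_{d,N}$ whose $p/N$-th power reproduces exactly the prefactor $Cd^{p}N^{1/(2Np)}$. Standard Young/H\"older tensor estimates for $\|S^k(x)\|_k$ give the expected growth in $d$ and in $N$, but organising those constants through the factorial bounds and a suitable admissible norm on $\h_N$ is a delicate bookkeeping exercise of the same kind that underlies Theorem~\ref{thm:TypeIerror}. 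Once that constant is secured, the rest of the argument is essentially a two-line Markov estimate together with one application of Corollary~\ref{cor:momentBound}.
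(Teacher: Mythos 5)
Your overall architecture is the same as the paper's — Markov/Chebyshev followed by a pathwise signature bound, then transport of the moment from $\nu$ to $\mu$ via membership of $\|\cdot\|_\gamma^p$ in $\Cc^{p,\gamma}$ together with Corollary~\ref{cor:momentBound} — and the observation that $|a^p-b^p|\leq |a-b|^p$ for $p\in(0,1]$ places $\|\cdot\|_\gamma^p$ in $\Cc^{p,\gamma}$ is exactly right. However, there is a genuine gap in the opening estimate. The pathwise bound $|\langle w,S_N(x)\rangle|\leq \|w\|_\h K_{d,N}\|x\|_\gamma^N$ is false when $\|x\|_\gamma<1$: the level-one iterated integral $S^1(x)$ scales like $\|x\|_\gamma$, not $\|x\|_\gamma^N$, so for small paths the left-hand side dominates the right. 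The correct factorial control yields $|S_N(x)|\lesssim K_{d,N}\,\bigl(\|x\|_\gamma\vee\|x\|_\gamma^{N}\bigr)$, and the low-norm branch $\|x\|_\gamma$ cannot be discarded. Consequently the inclusion $\{|\langle w,S_N(X)\rangle|>r\}\subseteq\{\|X\|_\gamma>(r/\|w\|_\h K_{d,N})^{1/N}\}$ fails, and so does the subsequent Markov step.

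This is not merely a bookkeeping issue: the $\{1-\frac{1}{N}+\frac{1}{N}E\}$ factor in the $E\leq 1$ regime of the theorem is not obtained, as you suggest, by loosening a cleaner bound $E$ via $E\leq 1-\frac{1}{N}+\frac{E}{N}$. It arises unavoidably from the low-norm branch. After raising the two-branched bound to the power $p/N$ and taking expectations one is left with the term $\ex_\nu[\|X\|_\gamma^{p/N}]$ in addition to $\ex_\nu[\|X\|_\gamma^p]$, and $\|\cdot\|_\gamma^{p/N}$ is \emph{not} an element of $\Cc^{p,\gamma}$; one must first apply Young's product inequality $\ex_\nu[\|X\|_\gamma^{p/N}]\leq 1-\frac{1}{N}+\frac{1}{N}\ex_\nu[\|X\|_\gamma^p]$ to reduce it to a $\Cc^{p,\gamma}$-moment, and only then invoke Corollary~\ref{cor:momentBound}. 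The maximum of the two resulting quantities — $1-\frac{1}{N}+\frac{E}{N}$ from the low-norm branch and $E$ from the high-norm branch — is what produces the two cases in the statement. Your streamlined argument, taken at face value, would prove a strictly \emph{stronger} bound (constant $E$ in both regimes), which is a signal that a necessary contribution has been dropped; the gap is the omitted $\|x\|_\gamma\vee$ portion of the pathwise estimate, and the fix is to carry both branches through the Markov step as the paper does.
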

\begin{proof}
By Chebyshev's inequality
\begin{equation}\label{eq:PIIest1}
P_{\mathrm{II}}=1-\nu\bigg( \langle w, S_N(x)\rangle>  r     \bigg)\geq 1-\bigg(\frac{\|w\|_{\h}}{r}\bigg)^{\frac{p}{N}}\ex_\nu\left[|S_N(X)|^{\frac{p}{N}}\right].
   \end{equation} 
In view of
\cite[Section 1]{boedihardjo2018decay} and references therein, we can obtain a pathwise estimate 
for the signature, truncated at level $N$, of any $d$-dimensional, $\gamma$-H\"older path $X$ with $\gamma\in(0, \frac{1}{2}]$. The latter takes the form:
\begin{align*}
\big|S_N(X)\big|^2
 &\leq \sum_{k=1}^{N}\bigg(\sum_{|w|=k} S^w_{0,1}(X)\bigg)^2
 \\& \leq \sum_{k=1}^{N} C^2(k)\|X\|^{2k}_\gamma\bigg(\sum_{|w|=k}\frac{1}{\Gamma(k\gamma+1)}\bigg)^2\leq \|X\|^{2}_\gamma\vee\|X\|^{2N}_\gamma\sum_{k=1}^{N} C^2(k)d^{2k},
\end{align*}
where~$|w|$ is the length of a word $w$, $\Gamma$ denotes the Gamma function and $C(k)=\Oo((1+c)^k)$ is an increasing polynomially growing function in $k$.
Hence the last estimate yields 
\begin{equation}\label{eq:SigPathwiseBound}
    \begin{aligned}
\big|S_N(X)\big|^2
&\leq \|X\|^{2}_\gamma \vee\|X\|^{2N}_\gamma Nd^{2N}(1+\rho)^N,
    \end{aligned}
\end{equation}
for some constant~$\rho>0.$
Taking expectations it follows that
\begin{align*}
\ex_\nu\bigg[ \big|S_N(X)\big|^{\frac{p}{N}}\bigg]
 & \leq d^{p}(1+\rho)^{p}N^{\frac{p}{2N}}\ex_\nu\left[\|X\|^{\frac{p}{N}}_\gamma\vee\|X\|^p_\gamma \right]\\
 & \leq C d^{p}N^{\frac{p}{2N}}\ex_\nu\left[\|X\|^{\frac{p}{N}}_\gamma\right]\vee
\ex_\nu\left[\|X\|^p_\gamma \right],
\end{align*} 
for some unimportant constant $C>0$.
By Young's product inequality, we have
$$
\ex_\nu\left[\|X\|^{\frac{p}{N}}_\gamma\right]
\leq \frac{1}{\frac{N}{N-1}}+\frac{1}{N}\ex[\|X\|^{p}_\gamma]=1-\frac{1}{N}+ \frac{1}{N}\ex_\nu[\|X\|^{p}_\gamma].$$
Now, notice that the function $f(x)=\|x\|_\gamma^{p}$ is in $\Cc^{p,\gamma}$. In view of Corollary~\ref{cor:momentBound} then
$$
\ex_\nu\left[\|X\|^{\frac{p}{N}}_\gamma\right]
\leq 1-\frac{1}{N}+ \frac{1}{N}\bigg(a^{-1}\big(\Hent(\nu|\mu)\big)  +\ex_\mu[\|X\|^{p}_\gamma]\bigg).$$
Combining all these estimates we see that $\ex_\nu\bigg[ \big|S_N(X)\big|^{\frac{p}{N}}\bigg]$ is bounded from above by
\begin{equation*}
\begin{aligned}
    Cd^pN^{\frac{p}{2N}}\max\bigg\{ 1-\frac{1}{N}+ \frac{1}{N}\bigg(a^{-1}\big(\Hent(\nu|\mu)\big)  +\ex_\mu[\|X\|^{p}_\gamma]\bigg)\;,\; \bigg(a^{-1}\big(\Hent(\nu|\mu)\big)  +\ex_\mu[\|X\|^{p}_\gamma]\bigg)    \bigg\}.
\end{aligned}    
\end{equation*}
The proof then follows by combining this with~\eqref{eq:PIIest1}.
\end{proof}

\begin{remark} A few observations on the lower bound for the type-$\mathrm{II}$ error in Theorem~\ref{thm:Type-II_error}:
\begin{enumerate}
    \item If the measure $\nu$ from the alternative hypothesis is not absolutely continuous with respect to $\mu$ then $\Hent(\nu|\mu)=\infty$ and, since $a^{-1}$ is (typically) increasing to $\infty$, the lower bound is then uninformative. Nevertheless, we believe that the regime in which $\Hent(\nu|\mu)<\infty$ is more interesting for application purposes. Indeed, if $\nu, \mu$ are absolutely continuous laws on path space then, a priori, it is more challenging to tell them apart (or detect novel observations in terms of these two alternatives). It is precisely this setting in which the type-$\mathrm{II}$ lower bound becomes informative. Nevertheless, our numerical experiments work well even in settings where the two alternatives are mutually singular; see Section~\ref{sec:anomalousDiffs} for more details.
    \item In estimating the type-$\mathrm{II}$ error, we made a choice to raise the signature to $p/N$ (below~\eqref{eq:SigPathwiseBound}). This choice was made to avoid exponential growth in the dimension~$d$ which appears on the right-hand side of~\eqref{eq:SigPathwiseBound}.
    \end{enumerate}
\end{remark}

We turn to the analysis of type-$\mathrm{I}$ errors~\eqref{eq:typeI}. 
Assuming for now that~$\mu$ is a Gaussian measure, it is possible to obtain estimates on the probability of the rejection region~\eqref{rejection_region} under $H_0$. In particular, for $\X=C^{p\text{-var}}([0,T];\R^d)$,
$\mu\in\mathscr{P}(\X)$ a Gaussian probability measure on path space and for some $N\in\N$ consider the truncated signature feature map $\varphi=S_N$ of level~$N$.  
The probability that a new sample point from~$\mu$ falls in the rejection region satisfies
$$
\mu\left(\Omega_r\right)
=\mu\bigg[  \langle w,S_N(X)\rangle> r    \bigg]\leq \exp\bigg\{-C\bigg(\frac{r}{\|w\|_{\h}}\bigg)^{\frac{2}{N}}\bigg\}.
$$
In turn, for a given significance level $\alpha\in(0,1)$, the following estimate on~$r$ holds:
$$
r\geq  C^{-1}\|w\|_{\h}\log(1/\alpha)^{\frac{N}{2}}.
$$
Such estimates can be performed not just for Gaussian measures~$\mu$ but also for laws of Gaussian functionals such as solutions of RDEs or in general measures that satisfy transportation-cost inequalities. A broad list of examples of such measures can be found in~\citep{gasteratos2023transportation}; see also Example~\ref{example:RDEs} above. 

Our next result provides type-$\mathrm{I}$ error upper bounds and consequently estimates on the threshold $r$ of the rejection region $\Omega_r$ \eqref{rejection_region} at fixed significance levels.

\begin{theorem}[Type-I error]\label{thm:TypeIerror} Let $\alpha\in(0,1), N\in\N,\gamma\in(0,1), p\in(0,1]$ and $S_N: C^\gamma([0,1];\R)\rightarrow T^{(N)}(\R)$ the truncated signature of level $N$. Let $\mu\in\mathscr{P}(C^\gamma([0,1];\R))$ be a probability measure satisfying Assumption~\ref{assumption:mu}. 
If there exist $t_0, C>0$ such that for all $t<t_0$  $a(t)\geq C t^2$,
then the following holds:

For all $w\in T^{(N)}(\R)$ and with probability $\alpha \in (0,1)$ over the draw of a random sample from $\mu$, a sample path $X$ that satisfies 
\begin{equation}\label{eq:rstarddim}
\langle w, S_N(X)\rangle> r^*:=\|w\|_{\h}\sqrt{N}d^{N}C(N)\max\bigg\{ \bigg[\frac{2}{C_1}\log\bigg(\frac{C_2}{\alpha}   \bigg)\bigg]^{\frac{1}{2p}},  \bigg[\frac{2}{C_1}\log\bigg(\frac{C_2}{\alpha}   \bigg)\bigg]^{\frac{N}{2p}} \bigg\}
\end{equation}
falls in the rejection region $\Omega_{r^*}$ \eqref{rejection_region}. The constants $C_1, C_2$ do not depend on  $N$ and satisfy
$$
C_1\leq \bigg( 2(\ex_\mu\|X\|^p_{\gamma}+C)\bigg)^{-\frac{1}{2}}
\qquad\text{and}\qquad
C_2=\ex_\mu\bigg[\exp\bigg(\frac{C_1^2}{2}\|X\|^{2p}_{\gamma}\|\bigg)\bigg].
$$
\end{theorem}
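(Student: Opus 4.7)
The plan is to combine the pathwise signature estimate already used in the proof of Theorem~\ref{thm:Type-II_error} with a sub-Gaussian concentration inequality for $\|X\|_\gamma^p$ that is implied by the generalised TCI under the super-quadratic assumption on~$a$ near the origin.

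First I would reduce the rejection event to a tail event on $\|X\|_\gamma$. By Cauchy--Schwarz in $\h$, $\langle w, S_N(X)\rangle \leq \|w\|_{\h}\,\|S_N(X)\|_2$, and the pathwise bound \eqref{eq:SigPathwiseBound} yields $\|S_N(X)\|_2 \leq \sqrt{N}\,d^N C(N)\max(\|X\|_\gamma,\|X\|_\gamma^N)$. Writing $K := \|w\|_{\h}\sqrt{N}\,d^N C(N)$ and $t := r^\ast/K$, a short case analysis (splitting on whether $\|X\|_\gamma$ is above or below~$1$) gives $\{\|X\|_\gamma \vee \|X\|_\gamma^N > t\} = \{\|X\|_\gamma > t\wedge t^{1/N}\}$, so that
\[
 \mu\bigl(\langle w,S_N(X)\rangle > r^\ast\bigr)
 \leq \mu\bigl(\|X\|_\gamma^p > (t\wedge t^{1/N})^p\bigr).
\]

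The main analytical step is a sub-Gaussian tail bound for the real-valued random variable $\|X\|_\gamma^p$. The functional $x \mapsto \|x\|_\gamma^p$ lies in $\Cc^{p,\gamma}$ by the elementary inequality $a^p - b^p \leq |a-b|^p$ for $a,b \geq 0$ and $p \in (0,1]$---the very inequality already used in the proof of Lemma~\ref{lem:Wcdual}. Because $a(s) \geq C s^2$ for $s$ close to zero, the TCI $\mu \in \mathscr{T}_a(c)$ is locally of Talagrand $T_2$ type, and a Bobkov--G\"otze / Herbst-type argument---implemented in the present generalised setting by combining the Gibbs variational principle for relative entropy with the Kantorovich duality of Lemma~\ref{lem:Wcdual}---produces the finiteness of the squared-exponential moment $C_2 := \ex_\mu[\exp(C_1^2 \|X\|_\gamma^{2p}/2)]$, provided $C_1$ is small enough; the stated bound $C_1^{-2} \geq 2(\ex_\mu\|X\|_\gamma^p + C)$ quantifies exactly this admissibility threshold. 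A direct Markov inequality applied to $\exp(C_1^2(\cdot)^2/2)$ then delivers
\[
 \mu(\|X\|_\gamma^p > u) \leq C_2 \exp(-C_1^2 u^2/2).
\]

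To conclude, I would impose this bound to be at most $\alpha$ and solve for the critical threshold. This gives $u \geq L := [(2/C_1^2)\log(C_2/\alpha)]^{1/(2p)}$, and the requirement $(t\wedge t^{1/N})^p \geq L^p$ reduces to $t\wedge t^{1/N} \geq L$, which is satisfied by $t = L$ when $L \leq 1$ and by $t = L^N$ when $L > 1$, i.e.\ by $t = \max(L, L^N)$. Substituting $t = r^\ast/K$ produces the announced expression for $r^\ast$ (up to the grouping of constants inside the logarithm in the statement). The main obstacle is Step~3: rigorously upgrading the generalised TCI with only a \emph{local} quadratic lower bound on $a$ into genuine squared-exponential integrability with the sharp constants $C_1, C_2$ appearing in the statement, since the classical Bobkov--G\"otze equivalence is tailored to globally quadratic TCIs---the local control near~$0$ must be paired with moment bounds on $\|X\|_\gamma^p$ to cover the full range of~$u$.
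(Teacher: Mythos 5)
Your proposal follows essentially the same route as the paper: pathwise control of the truncated signature by~\eqref{eq:SigPathwiseBound}, reduction of the rejection event to a tail event for $\|X\|_\gamma$, an exponential tail bound for $\|X\|_\gamma^p$ derived from the TCI, and then solving for the critical threshold. The step you flag as your main obstacle---upgrading the generalised TCI with a local quadratic lower bound on $a$ into genuine squared-exponential integrability---is not re-derived from scratch in the paper at all; the paper simply invokes \cite[Corollary~2.6(ii)]{gasteratos2023transportation}, which records precisely this Bobkov--G\"otze / Herbst-type concentration consequence in the generalised $(a,c)$-TCI setting, and then applies Chebyshev's inequality to the exponential functional $\exp\bigl(\tfrac{C_1^2}{2}\|X\|_\gamma^{2p}\bigr)$. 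So the ``gap'' you identify is real as a proof-from-first-principles matter, but is closed by citation, and your sketch of how one would prove it (Gibbs variational principle plus Kantorovich duality from Lemma~\ref{lem:Wcdual}, applied to functions in $\Cc^{p,\gamma}$) is indeed the correct mechanism behind that corollary. One small point worth noting: your reduction $\{\|X\|_\gamma\vee\|X\|_\gamma^N>t\}=\{\|X\|_\gamma>t\wedge t^{1/N}\}$ is the tighter (correct) identity; the paper's displayed chain writes $\vee$ where $\wedge$ is needed for the inequality to run in the right direction, though the final formula for $r^*$ is consistent with the $\max(L,L^N)$ form you obtain.
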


\begin{proof} 
The pathwise estimate~\eqref{eq:SigPathwiseBound} furnishes
 \begin{align*}
\alpha = \mu(\Omega_r)
&\leq\mu\bigg( |S_N(X)|\|w\|_{\h}> r   \bigg)
  \leq \mu\bigg( \|X\|_\gamma \vee\|X\|^{N}_\gamma >\bigg(\frac{r}{\sqrt{N}d^{N}(1+\rho)^{N/2}\|w\|_{\h}}\bigg)\\
 & \leq \mu\bigg( \|X\|_\gamma>\bigg(\frac{r}{\sqrt{N}d^{N}C^{N/2}\|w\|_{\h}}\bigg)\vee \bigg(\frac{r}{\sqrt{N}d^{N}C^{N/2}\|w\|_{\h}}\bigg)^{\frac{1}{N}}      \bigg).
\end{align*}
From the assumptions on $\mu,a$,~\cite[Corollary 2.6(ii)]{gasteratos2023transportation} and Chebyshev's inequality, the latter is bounded from above by

$$ C_2
\bigg[\exp\bigg\{-\frac{C^2_1}{2}\bigg(\frac{r}{\sqrt{N}d^{N}C^{N/2}\|w\|_{\h}}\bigg)^{2p}\vee \bigg(\frac{r}{\sqrt{N}d^{N}C^{N/2}\|w\|_{\h}}\bigg)^{\frac{2p}{N}}      \bigg\}\bigg], 
$$
from which we deduce that 
$$
\log(\alpha/C_2)\leq-\frac{C^2_1}{2}\bigg(\frac{r}{\sqrt{N}d^{N}C^{N/2}\|w\|_{\h}}\bigg)^{2p}\vee \bigg(\frac{r}{\sqrt{N}d^{N}C^{N/2}\|w\|_{\h}}\bigg)^{\frac{2p}{N}} .  $$
Solving this inequality for $r$, we are led to the desired conclusion.
\end{proof}

\begin{remark}[Tradeoffs related to $N$]
From the estimate~\eqref{eq:rstarddim} we observe that, at fixed significance level $\alpha$, the rejection threshold $r^*=r^*(N)$ grows as the signature truncation level $N$ grows. In particular, rejecting~$H_0$ becomes harder since larger observations become more typical (indeed, tails of the signature become "fatter" as~$N$ increases). On the other hand, larger values of~$N$ lead to higher robustness for test statistics. Indeed, since linear maps of the signature are universal, larger values of~$N$ lead to larger classes of admissible test statistics on path space.
\end{remark} 

\begin{remark}[Error bounds when $r<0$]\label{rem:rnegative}
We assumed above $r>0$ and derived upper/lower bounds for type-$\mathrm{I}/\mathrm{II}$ errors respectively. 
In the case $r<0$, it is easy to check that our techniques provide symmetric  lower/upper bounds for type-$\mathrm{I}/\mathrm{II}$ errors respectively.
\end{remark}

\section{Experiments}

In this section, we evaluate the type-I error and the statistical power of signature-based test statistics, using synthetic anomalous diffusion data and real-world molecular biology data.

\subsection{Anomalous diffusions}\label{sec:anomalousDiffs}

We consider the problem of distinguishing Brownian motion (BM) from BM perturbed by an impulsive spike occurring at a uniformly random time. This example, connected to the theory of Brownian fast points \citep{davis1985brownian}, has been introduced by \citet{davis1984randomly} and \citet{hitchcock1992some}, and has also been empirically examined by \citet{cass2024variance}. For a given sample path $X$, we consider the hypothesis test
\begin{equation*}
H_0: X\sim\mu  \text{ against } 
        H_1: X\sim\nu_\varepsilon,
\end{equation*}
 with~$\mu$ the law of Brownian motion on $[0,2]$ and $\nu_\varepsilon$ that of a process with a spike. Following the presentation in \cite[Chapter 6]{hitchcock1992some},
 define the alternative process
$$
X^\varepsilon_t=B_t+\varepsilon\sqrt{[t-\theta]^+}\wedge 1, \quad\text{for } t\in [0,2],
$$
where $\theta$ is a random time drawn from the uniform distribution on $[0,1]$, $\varepsilon>0$ controls the magnitude of the spike, and $[x]^+=\max\{x,0\}$. Our numerical studies show how our detection procedure—based on signature transforms of the paths—can distinguish between~$\mu$ and~$\nu_\varepsilon$ for different intensities~$\varepsilon$ using the Area Under the Receiver Operating Characteristic curve (AUROC) as performance metric. We expect that as $\varepsilon$ increases, the spike gets more pronounced and produces a clearer non-Brownian signature, leading to higher detection power. When $\varepsilon \approx 0$, the spike is subtle so detection is more challenging.

\citet{hitchcock1992some} established that there is a critical value for the intensity parameter $\varepsilon$. When $\varepsilon < \sqrt{8}$, the measure corresponding to the process with a spike remains absolutely continuous with respect to the measure of standard Brownian motion. Conversely, when $\varepsilon > \sqrt{8}$ the two measures become singular. In our experiments, using the distance to the expected signature as test statistic, we do not observe an abrupt phase transition precisely at $\varepsilon=\sqrt{8}$. Instead, the AUROC improves gradually as~$\varepsilon$ increases 
(Figure~\ref{fig:auroc}). 

\begin{figure}[h]
    \centering
\begin{subfigure}{0.45\textwidth}
    \centering \includegraphics[width=1.\linewidth]{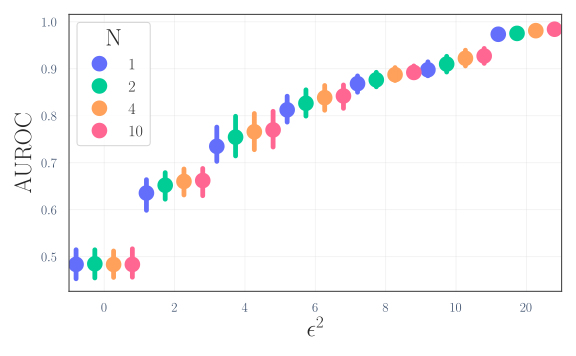}
\caption{}\label{fig:auroc}
\end{subfigure}
\begin{subfigure}{0.45\textwidth}
    \centering
\includegraphics[width=1.\linewidth]{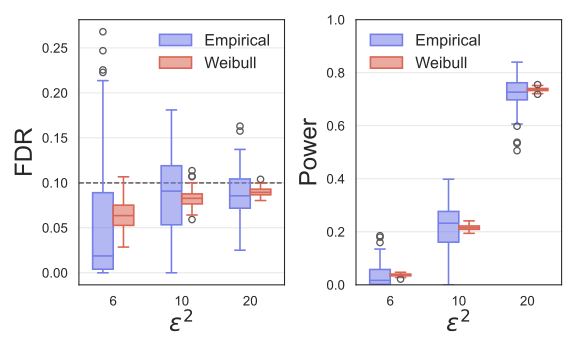}
\caption{}\label{fig:fdr}
\end{subfigure}
\begin{subfigure}{0.45\textwidth}
    \centering
\includegraphics[width=1.\linewidth]{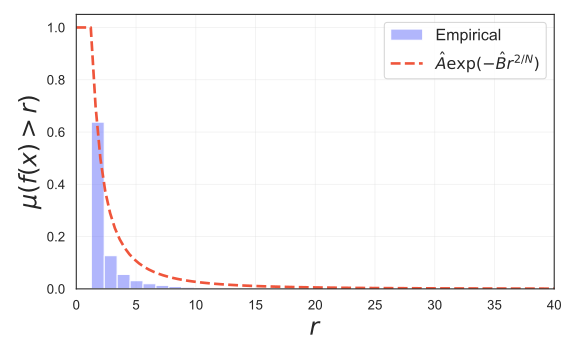}
\caption{}\label{fig:weibull_fit}
\end{subfigure}
\begin{subfigure}{0.45\textwidth}
    \centering
\includegraphics[width=1.\linewidth]{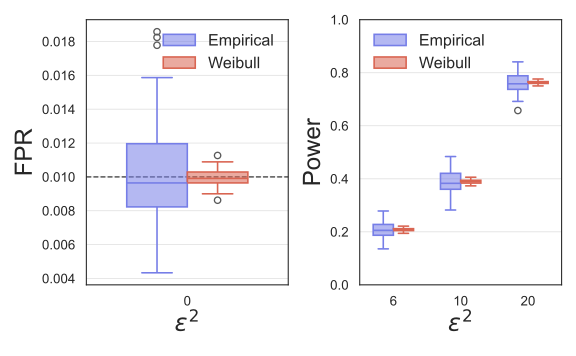}
\caption{}\label{fig:fpr}
\end{subfigure}
    \caption{\textbf{Brownian motion perturbed by a  spike.} (a) AUROC as a function of the spike intensity for the distance to the expected signature. (b) Multiple testing false discovery rate and power at level $\alpha=0.1$ comparing empirical p-values (from 1,000 samples) with p-values obtained from a Weibull tail-bound fitted using 100,000 samples. (c) Weibull tail-bound fit (d) Single hypothesis false positive rate and power at level $\alpha=0.01$. }
    \label{fig:placeholder}
\end{figure}

Next, to evaluate the type-I error rate and the power, we simulate $J=100$ researchers. Each researcher $j$ has an independent fixed dataset $D_j$ of $n=2000$ Brownian motion sample paths and $L=50$ test sets $D^\text{test}_{j,l}$ 
(for $l=1,\ldots, L$)
of $n_\text{test}=1000$ Brownian motion sample paths, with $10\%$ of outliers (Brownian motion perturbed by a spike). Each researcher divides its reference dataset $D_j$ into $1000$ sample paths to estimate the expected signature of Brownian motion and $1000$ sample paths to estimate the p-values of all observations across all test datasets $D_{j,l}$. To measure the false discovery rate (FDR), the Benjamini-Hochberg procedure~\citep{benjamini1995controlling} is applied to all p-values in $D_{j,l}$ at level $\alpha=0.10$, the false positive rate (FPR) is then estimated, and averaged over the $L$ test datasets. Similarly, a statistical estimate for the power is obtained. To estimate the false positive rate, the raw  p-values are thresholded at level $\alpha=0.01$. 

As can be seen on Figures~\ref{fig:fdr} and~\ref{fig:fpr}, which show the FDR and FPR for different values of the signal strength $\varepsilon^2$, both methods control the marginal FDR and FPR at the desired levels. We also compared to a parametric approach (Weibull), where we fit a curve $A\exp(-Br^{2/N})$ to the tail of the reference measure estimated with a relatively large number of samples, leveraging the results from \Cref{ssec:probabilistic_estimates}. As can be seen on  on Figure~\ref{fig:fdr}, this parametric approach controls the conditional FDR and FPR for a greater proportion of researchers. As expected, the false positive rate exhibits a lower variance (Figure~\ref{fig:fpr}) when the Weibull parametric approach is used.

\begin{figure}[h]
    \centering
    \begin{subfigure}{0.49\textwidth}
     \centering
         \includegraphics[width=1\linewidth]{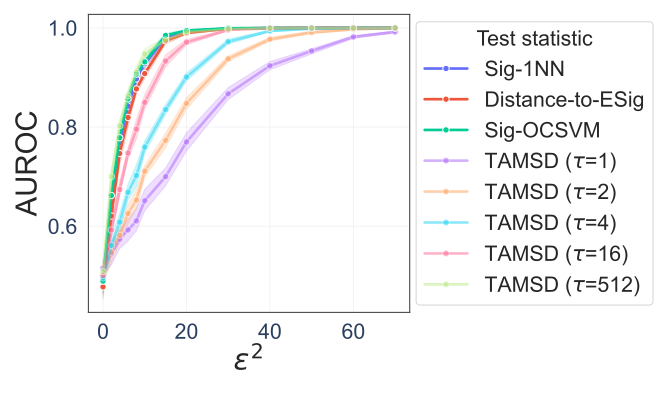}
    \caption{}
    \label{fig:placeholder}
    \end{subfigure}
       \begin{subfigure}{0.49\textwidth}
     \centering
         \includegraphics[width=1\linewidth]{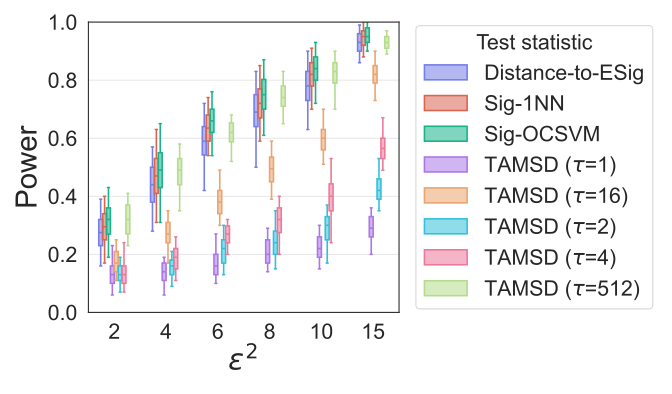}
    \caption{}
    \label{fig:placeholder}
    \end{subfigure}
    \caption{\textbf{Comparison of different test statistics.} Comparison of \acrshort{ocsvm}, conformance score, distance to the expected signature, and TAMSD with $\tau\in\{1,2,4,16,512\}$.}
\end{figure}

Finally, we compare our signature-based test statistics with another test statistic for detecting anomalous diffusions from the literature \citep{sikora2017mean} given by the time-averaged mean square displacement (TAMSD)
\begin{align}
    M_N(\tau) := \frac{1}{N-\tau}\sum_{j=1}^{N-\tau}(X(j+\tau)-X(j))^2.
\end{align}
The TAMSD scales as $M_N(\tau)\sim \tau ^\alpha$ with $\alpha=2H$ for fractional Brownian motion with Hurst index $H \in (0,1)$.
Brownian motion has a linear power-law growth of the mean squared displacement in the course of time.

\subsection{Anomaly detection in RNA direct sequencing data}

In molecular biology, a central challenge is to detect non-canonical nucleotides on RNA molecules, i.e. bases chemically modified relative to the four standard nucleotides (A, U, G, C). Mapping such modifications has become a major research focus, as they have been found to modulate crucial cellular processes and are increasingly linked to human diseases, ranging from neurodevelopmental disorders to cancer. 
Recently, significant progress has been achieved by leveraging Nanopore direct sequencing data \citep{white2022modification,furlan2021computational}. Nanopore sequencing transduces long RNA polymers into an electrical signal and chemical modifications have been shown to induce characteristic alterations in the measured signal. Most prior approaches have focused on training neural network based classifiers on known modification types. However, since many modifications remain unknown or poorly characterised, this is also naturally cast as a novelty detection task \cite{lemercier2025path}.

\begin{figure}[h]
    \centering
    \includegraphics[width=1\linewidth]{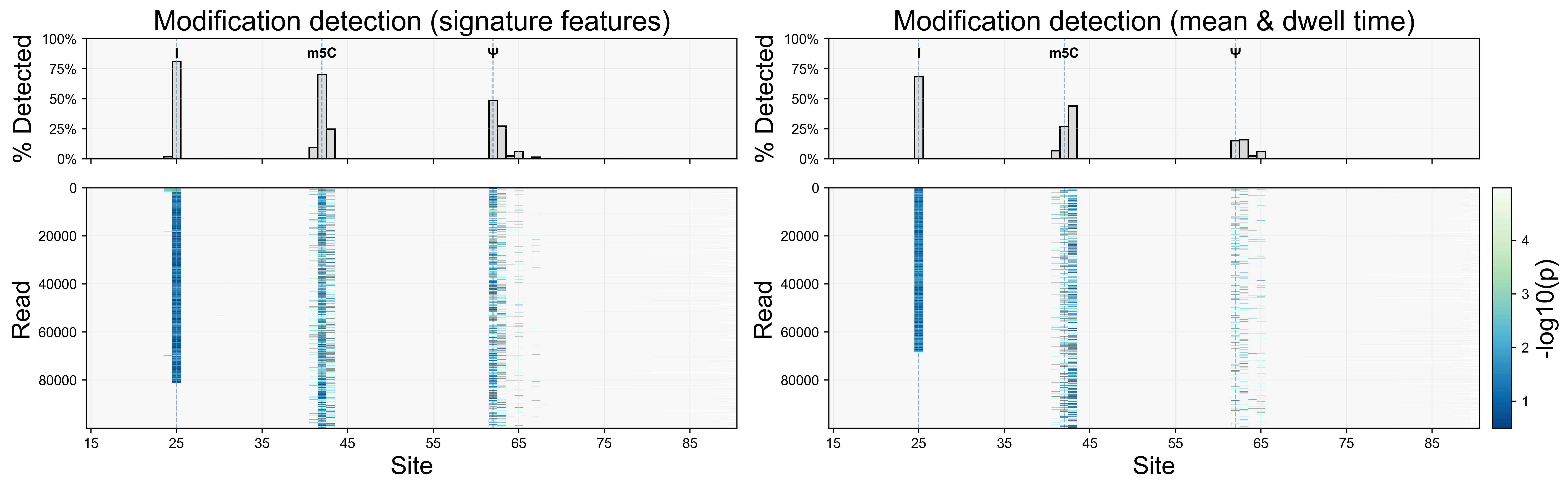}
    \caption{\textbf{Modification detection in nanopore reads with \acrshort{ocsvm}.} Bottom: per-read p-values (with multiple testing correction) at each site; non-significant p-values at level 0.20 are rendered light grey. Top: for each site, the proportion of significant reads. \emph{Left:} signature features. \textit{Right:} mean-current and dwell-time features.}
    \label{fig:RNA}
\end{figure}

Here, we analyse short synthetic oligonucleotides, i.e., RNA molecules chemically synthesised in a laboratory. In this controlled setting, three distinct chemical modifications were introduced at defined positions within a 100-nucleotide sequence \citep{leger2021rna}
\begin{figure}[H]
    \centering
    \includegraphics[width=1\linewidth]{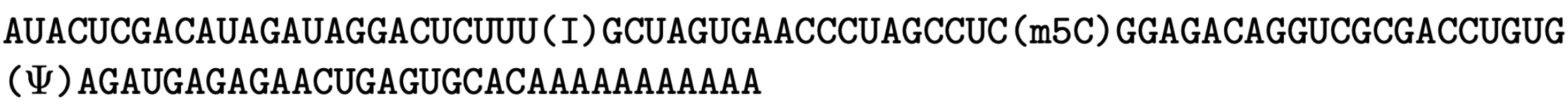}
\end{figure}
The raw nanopore direct sequencing data were obtained from the European Nucleotide Archive (accession number PRJEB44511). It contains an unmodified sample and a sample where the three modifications have been deposited. Basecalling and signal processing were performed using Dorado\footnote{\url{https://github.com/nanoporetech/dorado}}, read alignment was carried out with minimap2 \citep{li2018minimap2}, and event-level signal segmentation was obtained with Uncalled4 \citep{kovaka2025uncalled4}, yielding segmented time series representations for our analysis. 

We compared a \acrshort{ocsvm} using signatures truncated at level $N=6$ after applying two path transforms (the \emph{time augmentation} and the \emph{invisibility-reset} transforms \citep{lyons2025signature} and a \acrshort{ocsvm} trained on two standard nanopore features given by the signal duration (commonly referred to as the \emph{dwell time}) and signal mean value. We fitted the \acrshort{ocsvm} using $3\,000$ reads from the unmodified sample for each site, and computed the empirical p-values using $100\,000$ other unmodified reads. Figure~\ref{fig:RNA} shows that, at a Benjamini–Hochberg FDR level of 0.20 (with Storey correction), the signature-feature model yields consistently higher recall for all three modification types, namely inosine (I), 5-methylcytosine (m5C), and pseudouridine ($\Psi$).

\section{Conclusion}

In this work, we have framed novelty detection on path space as a hypothesis testing problem based on signature statistics. By leveraging the transportation-cost inequalities of~\citet{gasteratos2023transportation}, we established tail bounds for false positive rates that go beyond Gaussian measures to encompass laws of RDE solutions with smooth bounded vector fields, thereby enabling rigorous quantile and p-value estimates. Through the shuffle product structure, we obtained exact formulas for smooth surrogates of conditional value-at-risk (CVaR) in terms of expected signatures, which in turn motivated new one-class SVM algorithms optimising smooth CVaR objectives. We further derived lower bounds on type-$\mathrm{II}$ error under alternatives with finite first moment, leading to general power bounds in the absolutely continuous setting. Finally, our numerical experiments on synthetic anomalous diffusion data and real molecular biology data demonstrated both the validity of type-$\mathrm{I}$ error control and the practical effectiveness of signature-based testing procedures.

These results highlight the dual theoretical and algorithmic contributions of signature methods to statistical testing and novelty detection on path space. Future directions include extending the analysis to heavier-tailed drivers, developing computationally efficient estimators of expected signatures in high dimensions, and exploring applications to other domains where path-dependent anomalies play a central role.

\acks{Acknowledgements.}
This work was supported in part by EPSRC (NSFC) under Grant EP/S026347/1, in part by The Alan Turing Institute under the EPSRC grant EP/N510129/1.

\newpage 
\appendix 

\section{Signature test statistics}\label{sec:test_statistcs}

We provide more details on several commonly-used anomaly scores on path space, which are given (or can be approximated) by an element of a signature kernel RKHS $\h$. These include distance-based score functions as well as solutions of one-class SVM optimisation problems. Throughout this section, $\Hh$ will denote Hilbert spaces induced by the covariance of a measure $\mu.$ When $\mu$ is Gaussian, $\Hh$ will be the Cameron-Martin space of $\mu.$ This choice of notation is made to distinguish the latter from the signature kernel RKHS.

\subsection{Conformance score}\label{ssec:conformance_score}

Let $(\X, \|\cdot\|_{\X})$ be a normed vector space and $\mu\in\mathscr{P}(\X)$ a Borel probability measure on~$\X$.
 Based on~\cite{cochrane2020anomaly}, we define two notions of conformance of a point $x$ to a corpus $C\subset\X$:
 one based only on the topology of $\X$ and one that leverages the structure of the measure $\mu$.
 
 \begin{definition}\label{def:variance_norm}\ 
 \begin{enumerate}
 \item The \textbf{covariance} of $\mu$ induces a bilinear form on the topological dual $\X^*$ as follows:
 For all $x^*, y^*\in\X^*$,
 $$
 \mathrm{Cov}_\mu(x^*, y^*):=\int_{\X}x^*(x) y^*(x)\;d\mu(x)=\ex_\mu[x^*y^*].
 $$
 \item The \textbf{variance norm} of $x\in\X$ is given by
     $$\| x\|_{\mu}:= \sup_{\mathrm{Cov}_\mu(x^*, x^*)\leq 1} x^*(x).$$

\item For a Borel measurable set $C\in\mathscr{B}(\X)$ and a point $x\in\X$, the $\mu$-\textbf{conformance} of 
  $x$ to $C$ is given by
  \begin{equation*}
      dist_\mu(x, C) := \inf_{y\in C}\|x-y\|_{\mu},
  \end{equation*}
  where $\|\cdot\|_{\mu}$ denotes the variance norm of $\mu$ on $\X$.
  \item For a Borel measurable set $C\in\mathscr{B}(\X)$ and a point $x\in\X$, the \textbf{topological conformance} of 
  $x$ to $C$ is given by
  \begin{equation*}
      dist_{\X}(x, C) := \inf_{y\in C}\|x-y\|_{\X}.
  \end{equation*}
  \end{enumerate}
   \end{definition}
   \begin{remark}
       The topological conformance of $x$ to $C$ is nothing but the classical distance of a point $x$ to a subset $C$ of a metric space. 
   \end{remark}

\noindent The case where $\mu$ is a \textit{Gaussian} measure and $\X$ a separable Banach space is special. One can then define a linear space $\Hh$ contained in $\X$ by
$$
\Hh:=\big\{ x\in\X : \|x\|_{\mu}<\infty \big\}.
$$
For every element $h\in\Hh$ there exists a representative $h^*\in\X^*$ such that $\|h\|_{\mu}=\mathrm{Cov}_\mu(h^*, h^*)$ and moreover $\Hh$ is a Hilbert space with inner product given by 
$$\langle h_1, h_2\rangle_\Hh:=\mathrm{Cov}_{\mu}(h_1^*, h_2^*).$$
The latter in fact shows that $\|\cdot\|_{\Hh}$ coincides with the variance norm $\|\cdot\|_{\mu}$ above. 
The proof of the following result is contained in~\cite[Section 4.2]{hairer2009introduction}, 
in particular Exercise~4.38 therein. 
\begin{proposition} Let $\X$ be a separable Banach space and $\mu\in\mathscr{P}(\X)$ a Gaussian measure. Then the variance norm $\|\cdot\|_{\mu}$ is equal to the norm $\|\cdot\|_{\Hh}$ of the Cameron-Martin Hilbert space $\Hh \subset\X$ associated to $\mu$.  
\end{proposition}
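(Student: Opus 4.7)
The plan is to identify both norms with the operator norm of a common linear functional on the first Wiener chaos of $\mu$. Since $\mu$ is Gaussian, Fernique's theorem gives $\int_\X \|x\|_\X^2\, d\mu(x)<\infty$, and so the canonical inclusion $\iota:\X^*\to L^2(\mu)$, $\iota(x^*):=x^*(\cdot)$, is well-defined and satisfies $\|\iota(x^*)\|_{L^2(\mu)}^2=\mathrm{Cov}_\mu(x^*,x^*)$. Let $\mathcal K\subset L^2(\mu)$ denote the closure of $\iota(\X^*)$, the first Wiener chaos. The Bochner integral $R_\mu(\xi):=\int_\X x\,\xi(x)\,d\mu(x)$ is well-defined as an element of $\X$ for every $\xi\in L^2(\mu)$, again by Fernique, and the standard construction recalled in \cite[Section~4.2]{hairer2009introduction} identifies $R_\mu|_{\mathcal K}:\mathcal K\to\Hh$ as an isometric isomorphism between $\mathcal K$ and the Cameron--Martin space $\Hh$, equipped with the inner product pulled back from $L^2(\mu)$.

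I would then prove $\|h\|_\mu=\|h\|_{\Hh}$ for $h\in\Hh$. Writing $h=R_\mu(h^*)$ with $h^*\in\mathcal K$ and $\|h\|_{\Hh}=\|h^*\|_{L^2(\mu)}$, the continuity of $x^*\in\X^*$ gives
\begin{equation*}
x^*(h)=x^*\bigg(\int_\X x\, h^*(x)\,d\mu(x)\bigg)=\int_\X x^*(x)\,h^*(x)\,d\mu(x)=\langle \iota(x^*),h^*\rangle_{L^2(\mu)}.
\end{equation*}
Cauchy--Schwarz yields $x^*(h)\leq \|\iota(x^*)\|_{L^2(\mu)}\|h^*\|_{L^2(\mu)}$, and taking the supremum over $x^*\in\X^*$ with $\mathrm{Cov}_\mu(x^*,x^*)\leq 1$ produces $\|h\|_\mu\leq\|h\|_{\Hh}$. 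The reverse inequality uses the density of $\iota(\X^*)$ in $\mathcal K$: choosing $x_n^*\in\X^*$ with $\iota(x_n^*)\to h^*$ in $L^2(\mu)$, the normalised ratios $x_n^*(h)/\mathrm{Cov}_\mu(x_n^*,x_n^*)^{1/2}$ converge to $\|h^*\|_{L^2(\mu)}=\|h\|_{\Hh}$.

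For the converse inclusion, if $h\in\X$ satisfies $\|h\|_\mu<\infty$, the linear form $\iota(x^*)\mapsto x^*(h)$ is well-defined and bounded on $\iota(\X^*)\subset L^2(\mu)$ with operator norm exactly $\|h\|_\mu$. It extends by density to a bounded functional on $\mathcal K$, and Riesz representation produces $h^*\in\mathcal K$ with $x^*(h)=\langle \iota(x^*),h^*\rangle_{L^2(\mu)}=x^*(R_\mu(h^*))$ for all $x^*\in\X^*$. Since $\X^*$ separates points of $\X$ by Hahn--Banach, $h=R_\mu(h^*)\in\Hh$ and $\|h\|_{\Hh}=\|h^*\|_{L^2(\mu)}=\|h\|_\mu$, completing the equality.

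The main obstacle is the analytic justification for the covariance operator $R_\mu$: one must verify that the defining Bochner integral is well-defined on all of $L^2(\mu)$, commutes with continuous linear functionals, and has $\Hh$ as its image when restricted to $\mathcal K$. Fernique's integrability theorem and the Radon property of Gaussian measures on separable Banach spaces provide the requisite analytic inputs; once these are in hand, the equality of norms reduces to the observation that $\|\cdot\|_\mu$ and $\|\cdot\|_{\Hh}$ are two incarnations of the same dual norm on $\mathcal K$ induced by the $L^2(\mu)$-inner product.
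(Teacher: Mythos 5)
Your proof is correct and follows precisely the standard covariance-operator / first-Wiener-chaos argument from Hairer's lecture notes, which is the reference the paper cites in lieu of giving its own proof. The only detail worth making explicit in the converse direction is that finiteness of $\|h\|_\mu$ forces $x^*(h)=0$ whenever $\iota(x^*)=0$ in $L^2(\mu)$ (consider $t\,x^*$ and let $t\to\infty$); this is what justifies the claim that the linear form $\iota(x^*)\mapsto x^*(h)$ on $\iota(\X^*)$ is well defined before you extend it by density and apply Riesz representation.
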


\noindent The following inequality explains why the conformance defined above is appropriate for detecting anomalies in data that is assumed to come from a Gaussian distribution.

\begin{theorem}[TSB inequality]\label{thm:TSB} Let $\mu$ be a \textit{Gaussian} measure on a separable Banach space $\X$. Let $C\in\mathscr{B}(\X)$, $\widehat{c}:=\Phi^{-1}(\mu(C))$ and $\overline{\Phi}=1-\Phi$, where $\Phi$ is the distribution function of a standard Normal distribution. Then for all $r\geq 0$,
      $$\mu\bigg( x\in\X: dist_\mu(x, C)\geq r     \bigg)\leq \overline{\Phi}(r+\widehat{c}).$$  
Moreover, for $r\geq -\widehat{c}$
we have 
 $$
 \mu\bigg( x\in\X: dist_\mu(x, C)\geq r     \bigg)
\leq \exp\left\{-\frac{(r+\widehat{c})^2}{2}\right\}.
$$ 

\end{theorem}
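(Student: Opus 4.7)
The plan is to derive both bounds directly from the Borell–Sudakov–Tsirelson (BST) Gaussian isoperimetric inequality, applied in its Cameron–Martin formulation. Recall that BST asserts: for any Borel set $A\subset \X$ with $\mu(A)\geq \Phi(a)$, the Cameron–Martin enlargement $A_r := A + r B_{\Hh}$, where $B_{\Hh}$ is the closed unit ball in the Cameron–Martin space $\Hh$, satisfies
\[
\mu(A_r) \geq \Phi(a+r), \qquad r\geq 0.
\]
Since the preceding proposition identifies the variance norm $\|\cdot\|_\mu$ with the Cameron–Martin norm $\|\cdot\|_\Hh$, the enlargement $A_r$ coincides with $\{x\in \X : dist_\mu(x,A) < r\}$.

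First, I would write the tautological identity
\[
\{x\in \X : dist_\mu(x,C) \geq r\} \;=\; \X\setminus C_r,
\]
where $C_r$ is the $r$-variance-norm enlargement of $C$. With the choice $a = \widehat{c} = \Phi^{-1}(\mu(C))$, so that $\mu(C) = \Phi(\widehat{c})$ exactly, the BST inequality gives $\mu(C_r) \geq \Phi(\widehat{c}+r)$. Passing to complements,
\[
\mu\bigl(\{x : dist_\mu(x,C)\geq r\}\bigr) \;\leq\; 1-\Phi(\widehat{c}+r) \;=\; \overline{\Phi}(\widehat{c}+r),
\]
which is the first bound. A minor point to check is that $C_r$ is measurable and that strict versus non-strict inequalities in the definition of $dist_\mu$ do not alter the value of $\mu(\X\setminus C_r)$; both follow from the usual monotone approximation argument.

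For the exponential refinement, I would combine the above with the elementary Mills-ratio estimate
\[
\overline{\Phi}(t) \;\leq\; \tfrac{1}{2}\exp(-t^2/2) \;\leq\; \exp(-t^2/2), \qquad t\geq 0,
\]
which follows from $\int_t^\infty e^{-s^2/2}ds \leq \frac{1}{t}e^{-t^2/2}$ together with the standard $\frac{1}{t\sqrt{2\pi}}$ prefactor. The assumption $r \geq -\widehat{c}$ ensures that $t := r+\widehat{c}\geq 0$, so the estimate applies and yields the claimed sub-Gaussian tail.

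The main substantive ingredient is the BST isoperimetric inequality itself, which is invoked as a black box; once that is in hand, the proof amounts to unwrapping definitions and invoking the identification of $\|\cdot\|_\mu$ with $\|\cdot\|_\Hh$. The only potential pitfall is the subtle point that $C$ is an arbitrary Borel set (not necessarily closed or convex), so the BST inequality must be invoked in its general Borel formulation due to Borell (1975), rather than in the more restrictive half-space version.
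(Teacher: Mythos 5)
Your proposal is correct and follows essentially the same route as the paper: invoke the Borell/Tsirelson--Sudakov Gaussian isoperimetric inequality for the Cameron--Martin enlargement of $C$, use the identification $\|\cdot\|_\mu = \|\cdot\|_{\Hh}$ to translate the enlargement into a sub-level set of $\mathrm{dist}_\mu(\cdot,C)$, pass to complements, and finish with the elementary tail bound $\overline{\Phi}(t)\leq e^{-t^2/2}$ for $t\geq 0$. One small quibble: the Mills-ratio estimate $\overline{\Phi}(t)\leq \tfrac{1}{t\sqrt{2\pi}}e^{-t^2/2}$ does not by itself give the uniform prefactor $\tfrac12$ near $t=0$ (it blows up there), but since you only need $\overline{\Phi}(t)\leq e^{-t^2/2}$ this does not affect the conclusion.
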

\begin{proof}
This a straightforward reformulation of the TSB inequality in~\cite[Theorem 11.6]{friz2020course}.
In particular, by the triangle inequality, along with the fact that the distance is nonnegative, we have 
\begin{equation*}
    \begin{aligned}
        \mu\bigg( x\in\X: dist_\mu(x, C)\geq r     \bigg)&\leq \mu\bigg( x\in\X: dist_\mu(x, C)\geq r   \bigg) 
        \\&\leq  \mu\bigg( x\in\X: \forall y\in C,\; \| x-y\|_{\mu}=\| x-y\|_{\Hh}\geq r  \bigg)\\&\leq  \mu\bigg( x\in\X: x\notin C+rB_\Hh     \bigg)\leq \overline{\Phi}(\widehat{c}+r),
    \end{aligned}
\end{equation*}
where $B_\Hh$ denotes the unit ball in $\Hh$ and the last inequality follows from the TSB inequality mentioned above. 
The second assertion then follows by the elementary bound $\overline{\Phi}(z)\leq e^{-z^2/2}$ which holds for all $z\geq 0$. 
\end{proof}
 
\begin{remark}\label{remark:concentration} As mentioned in~\citep{cochrane2020anomaly}, if the corpus has probability at least~$\frac{1}{2}$ 
($\mu(C)\geq \frac{1}{2})$, then $\widehat{c}\geq \Phi^{-1}(\frac{1}{2})=0$.  Therefore, the probability that the conformance of a (random) sample point $X$ to $C$ exceeds a threshold $r$ is  bounded above by $e^{-r^2/2}$. In fact, in this case, the second moment of the $\mu$-conformance can be bounded uniformly over~$\widehat{c}$:

\begin{equation*} \begin{aligned}
\ex_\mu\left[dist^2_{\mu}(X, C)\right]
 &= 2\int_{0}^{\infty}r\cdot\mu\big(x: dist_{\mu}(x,C)>r\big)dr\\&
 \leq 2\int _{0}^{\infty}r\overline{\Phi}(\widehat{c}+r)dr\leq
 2\int _{0}^{\infty}r\overline{\Phi}(r)dr=\frac{1}{2}.
     \end{aligned}
 \end{equation*}

 On the other hand, if $\mu(C)<\frac{1}{2}$, then $\widehat{c}<0$. In this case, the TSB inequality provides a $\widehat{c}$-dependent upper bound for the second moment of the $\mu$-conformance:
 \begin{equation*}
\begin{aligned}
 \ex_\mu\left[dist^2_{\mu}(X, C)\right]
  &= 2\int_{0}^{\infty}r\cdot\mu(x: dist_{\mu}(x,C)>r)dr\\&
 \leq 2\int _{0}^{-\widehat{c}}r\overline{\Phi}(\widehat{c}+r)dr+2\int _{-\widehat{c}}^{\infty}r\overline{\Phi}(\widehat{c}+r)dr
 \\&\leq 
 2\int _{0}^{-\widehat{c}}rdr+2\int _{0}^{\infty}(z-\widehat{c})\overline{\Phi}(z)dz
\\&=\widehat{c}^2+2\int _{0}^{\infty}\bigg(z\overline{\Phi}(z)-\widehat{c}\overline{\Phi}(z)\bigg)dz
=\widehat{c}^2+\frac{1}{2}-\widehat{c}\sqrt{\frac{2}{\pi}},
     \end{aligned}
 \end{equation*}
where we bounded $\bar{\Phi}$ from above by $1.$  This case is particularly relevant when $\X$ is a high-dimensional vector space: as the dimension of $\X$ grows, the volume $\mu(C)$ of the corpus decreases. 
Moreover, as $\mu(C)$ tends to zero, then $\widehat{c}$ decreases to $-\infty$ and the tails of the conformance score are upper bounded by those of a Gaussian with mean converging to $\infty;$ one then expects that the conformance concentrates to increasingly larger values as the dimension of $\X$ grows.
 Thus, an increasingly larger portion of the data will achieve higher conformance scores. Anomalies should then be classified by adding $\widehat{c}$ to the conformance score.
\end{remark}

\subsection{One-class support vector machines}\label{sec:ocsvm_appendix}

Here, we review \glspl{ocsvm} through the lens of the conditional value-at-risk (CVaR), a concept originating in financial risk management and stochastic programming, also known as the superquantile, expected shortfall, or average value-at-risk. The CVaR of a random variable $Z$ summarises the average behaviour of the tail of its distribution. \cite{takeda2008nu} showed that extended $\nu$-support vector classifiers (E$\nu$-SVC) introduced in~\citep{perez2003extension} can be interpreted as minimising a CVaR. Subsequently,~\cite{tsyurmasto2014value} showed that \glspl{ocsvm} can be also be interpreted as CVaR minimisation. More precisely, \glspl{ocsvm} seek $w\in\h$ that minimises the regularised CVaR
$$
w^*=\arg\min_{w\in \h} \bigg\{\mathrm{CVaR}_{\alpha}(\langle w, \varphi(X)\rangle_\h)+\frac{1}{2} \|w\|^2_{\h}\bigg\}.
$$
Then, for any $\alpha \in [0,1]$, the decision function is defined by $\mathds{1}_{\Omega^{\mathrm{CVaR}}_{\alpha}}(\cdot)$ where 
\begin{align}\label{eq:region_cvar}
\Omega^{\mathrm{CVaR}}_\alpha := \{x\in\X:\langle w^*, \varphi(x)\rangle_\h\leq \text{VaR}_\alpha(\langle w^*, \varphi(X)\rangle_\h)\}. 
\end{align}
By the variational formulation of the CVaR, we have
\begin{align*}
w^* = \arg\min_{w\in \h}\left\{
 \min_{\eta\in \R}
\left\{
\eta + \frac{\ex_\mu[[\langle w,\varphi(X)\rangle_\h-\eta]^+]}{1-\alpha}\right\}
+ \frac{1}{2}\|w\|_{\h}^2
\right\}.
\end{align*}
The change of variables $\rho=-\eta$, $v=-w$, $\gamma=1-\alpha$ yields
\begin{align*}
v^* = \arg\min_{v\in \h}
\left\{
\frac{1}{2}\|v\|_{\h}^2
+\min_{\rho\in \R}
\left\{
\frac{1}{\gamma} \ex_\mu[[-\langle v,\varphi(X)\rangle_\h+\rho]^+] -\rho\right\}
\right\}.
\end{align*}
When the measure $\mu$ is unknown, but a sample $x_1, \ldots, x_n\sim\mu$ is available, an empirical estimator \citep{wang2014robust} may be formed 
\begin{align}\label{eq:ocsvm_unconstrained}
(\hat{v}^*, \hat{\rho}^*)=\arg\min_{v\in \h,\rho\in \R} 
\left\{
\frac{1}{2}\|v\|_{\h}^2
+\left(
\frac{1}{\gamma n}\sum_{i=1}^{n}[\rho-\langle v, \varphi(x_i)\rangle_\h]^+ -\rho
\right)
\right\},
\end{align}
which is the unconstrained version of the quadratic programming problem~\citep{xiao2017ramp, norton2017soft}
\begin{align*}
   &\min_{v\in \h,\rho\in \R,\xi \in \R^n}\left\{
   \frac{1}{2}\|v\|_{\h}^2
   + \left(\frac{1}{\gamma n}\sum_{i=1}^{n}\xi_i-\rho\right)\right\},\\
    &\text{subject to } \langle v, \varphi(x_i)\rangle_\h\geq \rho -\xi_i
    \text{ and } \xi_i\geq 0,\text{ for all }i=1,\ldots,n
\end{align*}
whose solution is such that all training points are on one side of an hyperplane in the feature space with maximum margin. The empirical \gls{ocsvm} seeks a decision function of the form $\mathds{1}_{\Omega^{\mathrm{ocsvm}}_\gamma}(\cdot)$ (takes the value $1$ for normal data and $0$ otherwise where
\begin{align*}
\Omega^{\mathrm{ocsvm}}_\gamma=\{x\in\X:\langle \hat{v}^*, \varphi(x)\rangle_\h\geq \hat{\rho}^*\}.
\end{align*}

\section{Conformance score for non-Gaussian streamed data}\label{Sec:Donsker}
In practice, the assumption that the underlying sample comes from a Gaussian is restrictive. In the sequel we shall investigate under what assumptions on the distribution of the underlying data one can have similar types of estimates for large values of the conformance. The goal is to derive such guarantees in the setting of streamed data i.e. we assume that observations are points on the space $C([0,T];\R^d)$.

\begin{definition} The set of streams of $d$-dimensional data is defined as 
$$
\mathcal{S}(\R^d)
:=\bigg\{ \mathbf{x}=(x_1,\dots, x_n)\in(\R^d)^n, n\in\N\bigg\}.
$$
\end{definition}
\noindent Given $k\in\N$ and a finite set $\{\mathbf{x}^i\}_{i=1}^{k}=\{(x^i_1, \dots, x^{i}_{n_i})\}_{i=1}^{k}\subset \mathcal{S}(\R^d)$ of $k$ streams, 
let $\ell(\mathbf{x}^i):=n_i$ denote the length of the $i$-th stream and
\begin{equation*}
    \label{eq:Ldef}
    L:=\max_{i=1,\dots, k}\ell(\mathbf{x}^i)
\end{equation*}
be the maximum stream length in the family 
$\{\mathbf{x}^i\}_{i=1}^{k}$. 
For each stream $\mathbf{x}^i$ with $\ell(\mathbf{x}^i)<L$, consider the stream
$$
\widetilde{\mathbf{x}}^i=\left(x^i_1, x^i_2,\dots, x^i_{n_i},x^i_{n_i}, \dots, x^i_{n_i}\right)
$$
of length $L$ augmented with the last observation in the remaining spots; note that the signature is invariant under this augmentation. Hence, the set $\{\widetilde{\mathbf{x}}^i\}_{i=1}^{k}$ consists of streams of equal length~$L$.

\begin{assumption}\label{Assumption:iid}
The streams $\{\widetilde{\mathbf{x}}^i\}_{i=1}^{k}$ are $k$ realisations of an i.i.d.
sample $X=(X^1, \dots, X^L)$ from an underlying distribution $\gamma\in\mathscr{P}(\R^d)$ with mean zero and a finite second moment i.e. for each $i=1, \dots, k$, $X(\omega_i)=\widetilde{\mathbf{x}}^i$ for some $\omega_i\in\R^d$. 
Moreover the covariance matrix $\Sigma^\gamma=\{\Sigma^\gamma_{i,j}\}_{i,j=1}^{d}:=\{ \ex_\gamma[X^{1}_{i}X^{1}_{j}]\}_{i,j=1}^{d}$ is invertible. A fortiori, we have 
$$
\sigma_\gamma^2:=tr\Sigma^\gamma=\int_{\R^d} |x|_2^2\;\gamma(dx)<\infty.
$$ 
\end{assumption}

At this point, we shall transition to the view of data points as vectors on the path space $C([0,1]; \R^d)$. Indeed, given an i.i.d. sample $X$ from $\mu$ as above, we consider a continuous random path $W^L$ defined via linear interpolation as
\begin{equation}\label{eq:Donskerapprox}
W^L_t:=\frac{1}{ \sqrt{L}}\sum_{k=1}^{[Lt]}X^{k}+\frac{(Lt-[Lt])}{\sqrt{L}}X^{[Lt]+1},
\qquad\text{for all }
t\in[0,1].
\end{equation} 
Then, Condition~\ref{Assumption:iid} placed above implies the following:
\textit{
For each $i=1,\ldots, k$, the continuous path}
$$
\phi^{i, L}(t) :=    \frac{1}{\sqrt{L}}\sum_{k=1}^{[Lt]}x^i_{k}+\frac{(Lt-[Lt])}{\sqrt{L}}x^{i}_{[Lt]+1},\qquad\text{for all }
t\in[0,1],
$$
\textit{associated to the} $i$-\textit{th stream }$\mathbf{\tilde{x}^i}$, \textit{is a realisation of the random continuous path} $W^L$.

Let us denote by $\mu^L$ the law of $W^L$ on $C([0,1]; \R^d)$. By the multidimensional version of Donsker's invariance principle~\cite[Chapter XIII, Theorem 1.9]{revuz2013continuous}, we know that as~$L$ tends to infinity,
$\mu^L$ converges weakly in $C([0,1]; \R^d)$ to the law of a $d$-dimensional Wiener process $W$ with covariance 
\begin{equation}\label{eq:BM}
    \ex[W^i_t W^j_s]= \Sigma^\gamma_{i,j}(t\wedge s),
    \quad\text{for all } t,s\in[0,1], i,j=1,\dots, d.
\end{equation}

The following provides a guarantee for the topological conformance of a sample drawn from the approximating distribution:

\begin{proposition}\label{prop:nonGaussianTSB} Let $\X$ be a separable Banach space, $C$ a Borel subset of $\X$ and $\{\mu^L\}_{L\in\N}\in \mathscr{P}(\X)$ a sequence of probability measures converging weakly to a Gaussian measure $\mu \in \mathscr{P}(\X)$. If  $K>0$ is the smallest constant such that  $\|x\|_{\X}\leq K\| x\|_{\mu}$
for all $x\in\X$
then, for all $r>0$,
\begin{equation*}
    \begin{aligned}       \limsup_{L\to\infty}\mu^L\bigg( x\in\X: dist_\X(x, C)\geq r     \bigg)
 \leq \overline{\Phi}(\widehat{c}+r/K)\leq 
 \exp\left\{-\frac{(\widehat{c}+r/K)^2}{2}\right\},
    \end{aligned}
\end{equation*}
where $\widehat{c}:=\Phi^{-1}(\mu(C))$. Moreover, if~$C$ is closed, we can replace~$\widehat{c}$ by $\Phi^{-1}(\limsup_{L\to\infty}\mu^L(C))$.
\end{proposition}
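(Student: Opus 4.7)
The plan is to combine three ingredients: weak convergence $\mu^L \to \mu$ via the Portmanteau theorem, the norm comparison $\|\cdot\|_\X \leq K\|\cdot\|_\mu$, and the Gaussian TSB inequality of Theorem~\ref{thm:TSB}. The overall strategy is to transfer the desired bound from the Gaussian limit $\mu$ back to the approximating laws $\mu^L$ by applying Portmanteau to a closed event, then relating the $\X$-distance to the $\mu$-distance via the norm comparison, and finally invoking TSB.

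First I would note that $x \mapsto dist_\X(x, C) = \inf_{y\in C}\|x-y\|_\X$ is $1$-Lipschitz on $\X$, so the set $\{x \in \X : dist_\X(x, C) \geq r\}$ is closed for every $r > 0$. Weak convergence together with the Portmanteau upper bound for closed sets then gives
$$
\limsup_{L \to \infty} \mu^L\bigl(\{x : dist_\X(x, C) \geq r\}\bigr) \;\leq\; \mu\bigl(\{x : dist_\X(x, C) \geq r\}\bigr).
$$

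Next, from $\|x\|_\X \leq K\|x\|_\mu$ one deduces, by taking an infimising sequence in $C$ (and using the convention $\|\cdot\|_\mu = +\infty$ outside $\Hh$), the pointwise inequality $dist_\X(x, C) \leq K\, dist_\mu(x, C)$, whence
$$
\{x : dist_\X(x, C) \geq r\} \;\subseteq\; \{x : dist_\mu(x, C) \geq r/K\}.
$$
Applying Theorem~\ref{thm:TSB} to the Gaussian $\mu$ then yields $\mu(dist_\mu(\cdot, C) \geq r/K) \leq \overline{\Phi}(\widehat{c} + r/K)$, and the elementary tail estimate $\overline{\Phi}(z) \leq e^{-z^2/2}$, valid for $z \geq 0$, provides the second inequality in the statement. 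Chaining these estimates proves the main claim.

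For the closed-$C$ refinement, the Portmanteau upper bound applied directly to $C$ gives $\limsup_L \mu^L(C) \leq \mu(C)$, hence $\Phi^{-1}(\limsup_L \mu^L(C)) \leq \widehat{c}$; since $\overline{\Phi}$ is decreasing, the bound remains valid with $\widehat{c}$ replaced by $\Phi^{-1}(\limsup_L \mu^L(C))$, though it becomes looser. This is the useful form when only the approximating measures are empirically accessible. I do not anticipate any substantive obstacle: aside from handling the (trivial) case $dist_\mu(x, C) = +\infty$ when no element of $C$ differs from $x$ by an element of $\Hh$, the argument is essentially bookkeeping involving Portmanteau, the hypothesised norm comparison, and the Gaussian TSB inequality.
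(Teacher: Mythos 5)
Your proof is correct and follows essentially the same route as the paper's: Lipschitz continuity of $x\mapsto dist_\X(x,C)$ to get a closed event, Portmanteau for the $\limsup$ transfer from $\mu^L$ to $\mu$, the pointwise inequality $dist_\X(x,C)\leq K\,dist_\mu(x,C)$ to pass to the variance-norm distance, and then the TSB inequality with the elementary bound $\overline{\Phi}(z)\leq e^{-z^2/2}$; the closed-$C$ refinement via Portmanteau applied to $C$ and monotonicity of $\Phi^{-1}$ and $\overline{\Phi}$ is also the same. Your side remark about the degenerate case $dist_\mu(x,C)=+\infty$ (when $x-y\notin\Hh$ for all $y\in C$) is a tidy observation that the paper leaves implicit but which the inequality handles vacuously.
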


\begin{proof} 
Since $\X\ni x\mapsto dist_{\X}(x, C)$ is Lipschitz continuous, then $A:=\{x: dist_{\X}(x, C)\geq r\}$ is a closed subset of $\X$. 
By the Portmanteau theorem for weak convergence of measures we know that $\limsup\limits_{L\to\infty}\mu^L(A)\leq \mu(A)$. Putting these facts together we derive
\begin{align*}
\limsup_{L\to\infty}\mu^L\bigg( x\in\X: dist_\X(x, C)\geq r     \bigg)&\leq  \mu\bigg( x\in\X: dist_\X(x, C)\geq r     \bigg)\\&\leq
        \mu\bigg( x\in\X: dist_\gamma(x, C)\geq r/K     \bigg)
 \\&\leq \overline{\Phi}(\widehat{c}+r/K).
\end{align*}
The last two inequalities are obtained using $dist_{\X}(x, C)\leq K dist_{\mu}(x, C)$
and the TSB inequality (Theorem~\ref{thm:TSB}). The last assertion follows again from the Portmanteau theorem and the monotonicity of $\Phi^{-1}$ since 
$\overline{\Phi}(\widehat{c}+r/K)\leq \overline{\Phi}(C+r/K)$
and
$$  C:=\Phi^{-1}\bigg(\limsup_{L\to\infty}\mu^L(C)\bigg)\leq \Phi^{-1}\bigg(\mu(C)\bigg) =\widehat{c}.
$$
\end{proof}

\begin{remark} [Dimensionality considerations: the uncorrelated case]
The constant~$K$ in the theorem above does not depend on the dimension $d$ of the data in the case where the covariance matrix $\Sigma^\gamma$ is the identity.
For example, if $\mu$ is the law of a standard $d$-dimensional Wiener process on $\X=C([0,1];\R^d)$, the corresponding Cameron-Martin space is given by $$
\Hh:=\bigg\{ x\in L^2([0,1];\R^d): x(0)=0, \dot{x}\in L^2([0,1];\R^d)\bigg\},
$$
and for $x\in\Hh$,

$$
\|x\|_{\X} :=
\sup_{t\in[0, 1]}| x(t)|_{\R^d}=\sup_{t\in[0, 1] }\bigg|\int_{0}^{1} \dot{x}(t)dt\bigg|_{\R^d}\leq \bigg(\int_{0}^{1} |\dot{x}(t)|^2_{\R^d}dt \bigg)^{1/2}=\|x\|_{\Hh}\equiv\|x\|_{\mu}. $$ 
Hence $K$ can be taken to be equal to $1$. However, what \textit{does} (implicitly) depend on dimension is the value of the topological conformance $dist_{\X}(x, C)$ because the norm of~$\X$ depends on~$d$.
\end{remark}
The following straightforward consequence of Proposition~\ref{prop:nonGaussianTSB} provides asymptotic conformance estimates in the particular case where $\mu$ is the law of a $d$-dimensional Wiener process with (invertible) covariance matrix $\Sigma^\gamma$.
\begin{corollary}\label{cor:donsker} Let $\mu$ be the law of $W$ in~\eqref{eq:BM} and~$\mu^L$ the law of the approximating path~$W^L$ in~\eqref{eq:Donskerapprox}  on $C([0,1]; \R^d)$. 
Under Condition~\ref{Assumption:iid}, for any Borel measurable subset $C\subset C([0,1];\R^d)$ and any $r>0$,
\begin{equation}\label{eq:Sigmabnd}
\begin{aligned}
   \limsup_{L\to\infty}\mu^L\bigg( x\in C([0,1];\R^d): & dist_\X(x, C)\geq r     \bigg)
 \leq \overline{\Phi}\left(\widehat{c}+r\|\Sigma^\gamma\|^{\frac{1}{2}}_{op}\right)\\&\leq 
 \exp\left\{-\frac{1}{2}\left(\widehat{c}+r\|\Sigma^\gamma\|^{\frac{1}{2}}_{op}\right)^2\right\}, 
\end{aligned}
\end{equation}
where $\widehat{c}:=\Phi^{-1}(\mu(C))$ and $\|\cdot\|_{op}$ a matrix operator norm compatible with the norm on $\R^d$ i.e. for all $x\in\R^d$, $|\Sigma^\gamma x|_{\R^d}\leq \|\Sigma^\gamma\|_{op}|x|_{\R^d}$. Moreover, if the set $C$ is closed, we can replace the constant $\widehat{c}$ by $\Phi^{-1}(\limsup_{L\to\infty}\mu^L(C))$.
\end{corollary}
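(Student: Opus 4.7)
The plan is to obtain Corollary~\ref{cor:donsker} as a direct specialisation of Proposition~\ref{prop:nonGaussianTSB} to the particular case $\X=C([0,1];\R^d)$ (with the sup norm), Gaussian limit $\mu$ equal to the law of the Wiener process $W$ with covariance~\eqref{eq:BM}, and approximating sequence $\{\mu^L\}$ given by the laws of the linearly interpolated partial-sum processes~\eqref{eq:Donskerapprox}. First, I would verify the weak-convergence hypothesis of Proposition~\ref{prop:nonGaussianTSB}: under Assumption~\ref{Assumption:iid}, the multidimensional Donsker invariance principle (recalled just before the corollary) yields $\mu^L\Rightarrow\mu$ on $C([0,1];\R^d)$, and $\mu$ is Gaussian since $W$ is a $d$-dimensional Wiener process.

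Second, I would identify the smallest constant $K$ such that $\|h\|_{\X}\leq K\|h\|_{\mu}$ on the Cameron-Martin space $\Hh$ of $\mu$. Using the distributional representation $W_t=(\Sigma^\gamma)^{1/2}B_t$ for a standard $d$-dimensional Brownian motion $B$, one has
\begin{equation*}
\Hh=\bigl\{h\in C([0,1];\R^d):h(0)=0,\ (\Sigma^\gamma)^{-1/2}\dot h\in L^2([0,1];\R^d)\bigr\},\qquad
\|h\|_{\Hh}^2=\int_0^1\bigl|(\Sigma^\gamma)^{-1/2}\dot h(t)\bigr|^2_{\R^d}dt,
\end{equation*}
and by the proposition cited right after Definition~\ref{def:variance_norm}, the variance norm $\|\cdot\|_{\mu}$ coincides with $\|\cdot\|_{\Hh}$. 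For any $h\in\Hh$ a Cauchy-Schwarz inequality followed by the compatibility $|(\Sigma^\gamma)^{1/2}y|_{\R^d}\leq\|\Sigma^\gamma\|_{op}^{1/2}|y|_{\R^d}$ gives
\begin{equation*}
\sup_{t\in[0,1]}|h(t)|_{\R^d}\leq\Bigl(\int_0^1|\dot h(s)|^2_{\R^d}ds\Bigr)^{1/2}\leq\|\Sigma^\gamma\|_{op}^{1/2}\Bigl(\int_0^1|(\Sigma^\gamma)^{-1/2}\dot h(s)|^2_{\R^d}ds\Bigr)^{1/2}=\|\Sigma^\gamma\|_{op}^{1/2}\|h\|_{\mu},
\end{equation*}
so $K$ can be taken to be $\|\Sigma^\gamma\|_{op}^{1/2}$, producing the factor $\|\Sigma^\gamma\|_{op}^{1/2}$ in~\eqref{eq:Sigmabnd}.

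Third, I would plug this value of $K$ into the conclusion of Proposition~\ref{prop:nonGaussianTSB}: the first inequality in~\eqref{eq:Sigmabnd} is then immediate, and the second follows from the elementary Gaussian tail bound $\overline{\Phi}(z)\leq e^{-z^2/2}$ valid for $z\geq 0$. For the closed-$C$ refinement, I would re-use the second assertion of Proposition~\ref{prop:nonGaussianTSB} directly, which itself relies on the Portmanteau theorem applied to the closed set $C$ together with monotonicity of $\Phi^{-1}$.

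The only real step is the linear-algebra identification of $K$, and it is the only place where the corollary differs from a verbatim application of Proposition~\ref{prop:nonGaussianTSB}; once one notes that the Cameron-Martin norm is built from $(\Sigma^\gamma)^{-1/2}\dot h$ while the sup norm is bounded via $(\Sigma^\gamma)^{1/2}$-scaled $L^2$ estimates, the factor $\|\Sigma^\gamma\|_{op}^{1/2}$ emerges after a single Cauchy-Schwarz step. All remaining ingredients (Donsker's principle, identification of variance norm with Cameron-Martin norm, and the Gaussian tail estimate) are invoked from earlier in the text without additional work.
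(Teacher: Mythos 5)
Your identification of the Cameron--Martin norm as $\|h\|_\mu^2=\int_0^1|(\Sigma^\gamma)^{-1/2}\dot h(t)|^2\,dt$ is correct, and the chain $\sup_t|h(t)|\le(\int_0^1|\dot h|^2)^{1/2}\le\|\Sigma^\gamma\|_{op}^{1/2}(\int_0^1|(\Sigma^\gamma)^{-1/2}\dot h|^2)^{1/2}=\|\Sigma^\gamma\|_{op}^{1/2}\|h\|_\mu$ is a valid derivation of the Lipschitz constant. But here is the gap: having set $K=\|\Sigma^\gamma\|_{op}^{1/2}$, Proposition~\ref{prop:nonGaussianTSB} returns $\overline{\Phi}(\widehat{c}+r/K)=\overline{\Phi}(\widehat{c}+r\|\Sigma^\gamma\|_{op}^{-1/2})$, with exponent $-1/2$, whereas~\eqref{eq:Sigmabnd} displays $+1/2$. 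Your remark that this step ``produces the factor $\|\Sigma^\gamma\|_{op}^{1/2}$'' confuses $r/K$ with $rK$; as written the proposal does not reach the stated inequality, and you should flag the mismatch rather than present your computation as immediately reproducing it.

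The mismatch is not a slip in your algebra but an error you have inherited from the paper's own proof, which writes the Cameron--Martin norm with $(\Sigma^\gamma)^{1/2}$ (where you correctly have $(\Sigma^\gamma)^{-1/2}$) and then invokes $(\int_0^1|\dot x|^2)^{1/2}\le\|\Sigma^\gamma\|_{op}^{-1/2}(\int_0^1|(\Sigma^\gamma)^{1/2}\dot x|^2)^{1/2}$, an inequality that would require $\|\Sigma^\gamma\|_{op}^{1/2}|y|\le|(\Sigma^\gamma)^{1/2}y|$ for all $y$ and fails whenever $\Sigma^\gamma$ has non-trivial spectrum. The two slips combine to produce $K=\|\Sigma^\gamma\|_{op}^{-1/2}$, which is what~\eqref{eq:Sigmabnd} encodes but which, for $\|\Sigma^\gamma\|_{op}>1$, cannot satisfy $\|x\|_\X\le K\|x\|_\mu$ (try $x(t)=t(\Sigma^\gamma)^{1/2}e_{\max}$ with $e_{\max}$ a top unit eigenvector of $\Sigma^\gamma$: then $\|x\|_\X=\|\Sigma^\gamma\|_{op}^{1/2}$ while $\|x\|_\mu=1$). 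Your $K=\|\Sigma^\gamma\|_{op}^{1/2}$ is the correct smallest constant, and the resulting bound $\overline{\Phi}(\widehat{c}+r\|\Sigma^\gamma\|_{op}^{-1/2})$ is also the physically sensible one: a more dispersed driving noise should loosen, not tighten, the tail estimate. In short, the sign of the exponent in~\eqref{eq:Sigmabnd} (and correspondingly in the variance-adjusted conformance~\eqref{eq:Ddef}) appears to be a genuine error in the paper; present your derivation as a correction of it, not as a verification. One small additional caveat: your step $|(\Sigma^\gamma)^{1/2}y|\le\|\Sigma^\gamma\|_{op}^{1/2}|y|$ is automatic for the Euclidean norm but is not guaranteed by the bare compatibility $|\Sigma^\gamma x|\le\|\Sigma^\gamma\|_{op}|x|$ postulated in the corollary; it would be worth restricting to the spectral norm or stating the extra hypothesis.
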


\begin{proof} By the multivariate version of Donsker's theorem~\cite[Chapter~XIII, Theorem~1.9]{revuz2013continuous},
$\mu^L$ converges weakly to $\mu$ as $L\to\infty$ in the topology of $\X=C([0,1];\R^d)$. The variance/Cameron-Martin norm of $\mu$ is given by 
$$
\|x\|^2_{\mu}=\int_{0}^{1}\big|(\Sigma^\gamma)^{1/2} \dot{x}(t)\big|_{\R^d}^2dt,
$$
and 
$$\|x\|_{\X}\leq \|\Sigma^\gamma\|^{-1/2}_{op} \bigg(\int_{0}^{1}\big|(\Sigma^\gamma)^{1/2}  \dot{x}(t)\big|_{\R^d}^2dt\bigg)^{1/2}=\|\Sigma^\gamma\|^{-1/2}_{op} \|x\|_{\mu}.
$$
This implies that the constant $K$ in Proposition~\ref{prop:nonGaussianTSB} can be taken equal to $\|\Sigma^\gamma\|^{-1/2}_{op}$ for any matrix operator norm.
\end{proof}

Corollary~\ref{cor:donsker} suggests that large values of the topological conformance 
$d_{\X}(x; C)$, under the measure $\mu^L$, scale according to the covariance norm $\|\Sigma^\gamma\|_{op}$. As the dimension grows it is clear that this quantity increases. For example, if $\R^d$ is equipped with the Euclidean (2-)norm, then $\|\Sigma^\gamma\|^2_{op}=\|\Sigma^\gamma\|^2_2=\max_{k=1,\dots, d}\lambda_k^2$, where $\lambda_k$ are the eigenvalues of the matrix~$\Sigma^\gamma(\Sigma^\gamma)^T$. Thus, in this case $\|\Sigma^\gamma\|_{op}$ is non-decreasing as a function of $d$.

In view of these facts, a sensible conformance measure of a stream $x\in C([0,1];\R^d)$, discretely sampled from an underlying distribution $\gamma\in\mathscr{P}(\R^d)$ with mean $0$ and covariance $\Sigma^\gamma$,  from a corpus $C\subset C([0,1];\R^d)$ would be the\textbf{ variance-adjusted topological conformance}: 
\begin{equation}\label{eq:Ddef}
    D_{\X, \gamma}(x; C):=\|\Sigma^\gamma\|^{1/2}_{op}dist_{\X}(x, C).
\end{equation}
Indeed, plugging the latter into~\eqref{eq:Sigmabnd} yields
\begin{align*}
\limsup_{L\to\infty}\mu^{L}&\bigg( x\in  C([0,1];\R^d):  D_{\X, \gamma}(x; C)\geq r\bigg)\\&=\limsup_{L\to\infty}\mu^{L}\bigg( x\in  C([0,1];\R^d):  dist_{\X}(x; C) \geq \frac{r}{\|\Sigma^\gamma\|^{1/2}_{op}}\bigg)
\leq \exp\left\{-\frac{(\widehat{c}+r)^2}{2}\right\}.
\end{align*}
The importance of this estimate lies in the fact that probabilities of high conformance scores are not affected by dimensionality as long as the scores are appropriately adjusted.


\newpage



\section{Experimental Details}

The raw nanopore direct sequencing data were obtained from the European Nucleotide Archive (accession number PRJEB44511). Specifically we downloaded the modified and unmodified samples with respective experiment accession numbers ERX6983963 and ERX6983965. We converted the samples into the pod5 format using the command
\begin{verbatim}
pod5 convert fast5 <path_to_fast5>/*.fast5 --output Oligo_<Mod_or_Unmod>.pod5
\end{verbatim}Using the (.fa) reference sequence 
\begin{verbatim}
>control
ATACTCGACATAGATAGGACTCTTTAGCTAGTGAACCCTAGCCTCCGGAGACAGGTCGCGACCTGTGTAGATGAGAGAAC
TGAGTGCACAAAAAAAAAAA
\end{verbatim}
we basecalled the sample using Dorado (version \texttt{0.9.1}) using the \texttt{rna002\_70bps\_hac@v3} model using the command
\begin{verbatim}
dorado basecaller <path_to_dorado_model>/rna002_70bps_hac@v3 sample.pod5 \
--reference ref.fa  --emit-moves --emit-sam > basecalled_sample.bam
\end{verbatim}
and event-level signal segmentation was obtained with Uncalled4, running the command 
\begin{verbatim}
uncalled4 align --ref ref.fa --reads sample.pod5 --bam-in basecalled_sample.bam \
--eventalign-out --flowcell FLO-MIN106 --kit SQK-RNA002 --min-aln-length 5 \
--eventalign-flags print-read-names,signal-index,samples > aligned_sample.txt
\end{verbatim}
For the modified sample, we had to additionally run the following commands before signal alignment with Uncalled4
\begin{verbatim}
samtools fastq -@8 -T "mv,ts,pi,sp,ns"  basecalled_sample.bam > sample.moves.fastq
minimap2 -y -ax map-ont -k9 -w1 --secondary=no -t8 ref.fa samples.moves.fastq \
| samtools sort -@8 -o sample.mm2.bam
\end{verbatim}

\bibliography{references}

\end{document}